\newtheorem{theorem}{Theorem}
\newtheorem{lemma}{Lemma}
\theoremstyle{definition}
\newtheorem{assumption}{Assumption}
\title{\bf Fast Global Convergence via Landscape of Empirical Loss    }
\date{}
\author[1]{Chao Qu}
\author[2]{Yan Li}
\author[2]{Huan Xu}
\affil[1]{ Faculty of Electrical Engineering,  Technion}
\affil[2]{H. Milton Stewart School of Industrial and Systems Engineering, Georgia Institute of Technology}
\begin{document}

\maketitle

\begin{abstract} 
	While optimizing convex objective (loss) functions has been a powerhouse for machine learning for at least two decades,    non-convex loss functions have attracted fast growing interests recently, due to many desirable properties such as superior robustness and classification accuracy, compared with their convex counterparts. The main obstacle for non-convex estimators is that it is in general intractable to find the optimal solution.  In this paper, we study the computational issues for some non-convex M-estimators. In particular, we show that the stochastic variance reduction methods converge to the global optimal with linear rate, by exploiting the statistical property of the population loss. En route,  we improve the convergence analysis for the batch gradient method in \cite{mei2016landscape}.

\end{abstract} 

\section{Introduction}
The last several years have witnessed the surge of big data and in particular the rise of {\em non-convex optimization}. Indeed,  non-convex optimization is at the frontier of of machine learning research and an incomplete list includes actively studied problems such as dictionary learning \cite{mairal2009online}, phase retrieval \cite{candes2015phase}, robust regression \cite{huber2011robust} and training   deep neural networks \cite{Goodfellow-et-al-2016}. It is well known that for general non-convex optimization problems, there is no efficient algorithm to find the global optimal solution, unless $P=NP$. Thus, research on  non-convex optimization   can be divided into two categories: The first one drops the requirement of global optimality and seeks a more modest solution concept, such as finding a stationary point. That is,  to show an algorithm  finds a solution $\theta$ such that $ \|\nabla f(\theta)\|^2_2\leq \varepsilon $ \cite{ghadimi2013stochastic,ghadimi2016accelerated,allen2016variance,reddi2016stochastic}. The second one takes  statistical assumptions into consideration, and aims to design algorithms with global convergence under reasonable statistical models \cite{agarwal2010fast,loh2013regularized,qu2017linear,qu2017saga}.  This paper belongs to the second class. Particularly, we consider the following non-convex M-estimator with finite data points:
\begin{equation}
\hat{\theta}=\arg \min_{\theta\in \Omega} R_n(\theta)\equiv \frac{1}{n} \sum_{i=1}^{n} \ell (\theta; (x_i,y_i)),
\end{equation}
where $\ell (\theta,(x_i,y_i))$ is a non-convex loss function for $\theta$, $(x_i,y_i)_{i=1}^n$ are  the sample, $\Omega$ is a convex set, and $\hat{\theta}$ is the global optimal solution. This problem is motivated by the following two examples.

The first example is the binary classification problem. Here,
$$ R_n(\theta)=\frac{1}{n}\sum_{i=1}^{n} \big( y_i-\zeta (\langle \theta, x_i \rangle) \big)^2,$$
where $ y_i\in \{0,1\} $, $ \zeta (\langle \theta, x \rangle)=P(Y_i=1 | X_i=x)$. Several empirical studies have demonstrated superior robustness and classification accuracy of non-convex losses, compared with their convex counterparts \cite{wu2007robust,nguyen2013algorithms}. One popular choice of $\zeta(\cdot)$ is the  logistic loss, which has been used  in   neural networks. 

The second example is   the robust regression problem,  in the following form:
\begin{equation}\label{eq:robust_regression}
R_n (\theta) =\frac{1}{n} \sum_{i=1}^{n} \rho (y_i-\langle \theta, x_i \rangle ).
\end{equation}
The research of robust algorithms for learning and inference was initiated in the 60’s by Tukey \cite{tukey1960survey} and developed rapidly in the 70’s and 80’s \cite{huber2011robust}. One common situation in which robust estimation is used occurs when the data contain outliers, where the ordinary regression method may fail \cite{huber2011robust}. Another situation is that there is a strong suspicion of heteroscedasticity in data, which allows the variance to be dependent on x \cite{tofallis2008least}. This is often the case for many real scenarios.

One renowned loss function in robust statistics is Tukey's bi-square loss,  defined as
$$
\rho_{Tukey}(t)=
\begin{cases}
1-(1-(t/t_0)^2)^3 \quad &\text{for} |t|\leq t_0,\\
1                 \quad & \text{for} |t|\geq t_0.
\end{cases}
$$
It is clear that Tukey's bisquare loss saturates when $t$ is  large  and thus it is non-convex.

While the above formulations have superior statistical properties, a natural question is how to find the global optimal solution of them. In particular if we apply first order methods (the gradient descent method and its variants, including the stochastic gradient method  or stochastic variance reduction methods), what is the theoretical guarantee of the solution? Existing work in literature asserts that the above algorithms converge to the stationary point $\theta$ such that $\|\nabla f(\theta)\|_2^2\leq \varepsilon$ with rate $ \mathcal{O} (\frac{L}{\varepsilon}+\frac{L\sigma^2}{\varepsilon})$ using SGD \cite{ghadimi2013stochastic}, with rate $\mathcal{O}(\frac{nL}{\varepsilon})$ using gradient descent (folklore in optimization community), and with rate $ \mathcal{O} (n+\frac{n^{2/3}}{\varepsilon})$ using SVRG \cite{reddi2016stochastic,allen2016variance}.

This paper aims to provide   stronger results, by a refined analysis making use of the statistical properties of the problem. The high-level intuition is that although the above finite-sum  problem is non-convex, its population counterpart $$R(\theta)=E_{x,y} \ell (\theta; (x,y))  )$$  has good  properties for optimization (although it may still be non-convex). Our work then exploits the resemblance between the population and empirical loss. In a nutshell, when $n$ is large, not only the objective function of the finite sample problem converges to that of the population problem, but both the gradient and the Hessian  converge as well under appropriate regularity conditions, which in literature is the study on the \textit{landscape} of the empirical loss \cite{mei2016landscape}. In particular, \citet{mei2016landscape} proved that the gradient (in $\ell_2$ norm) and the Hessian (in operator norm) converge to its population counterpart with a rate $ \mathcal{O}\sqrt{p\log n/n} $.  Using this tool, they showed a global converge result for  the batch gradient method.
However,  the results in \citet{mei2016landscape} do not make use of the smoothness of the objective function. Instead, they used Lipschitz continuity which leads to a loose rate.  In this paper,  we refine the analysis in the batch gradient method by exploiting the smoothness of the objective function and get a better rate. Moreover, since the  objective functions are of the form of finite-sum of $n$ items and $n$ is large, we apply the Stochastic Variance Reduction Method (SVRG \cite{johnson2013accelerating,xiao2014proximal} and SAGA \cite{defazio2014saga} )  on these problems and established much faster convergence results than the batch gradient method, both  in theory (Section \ref{section:theory})  and in numerical experiments (Section \ref{section:empirical_result}).

We now offer a brief introduction of SVRG and SAGA:
SVRG and related methods have recently surged into prominence for convex optimization given their edge over stochastic gradient descent \cite{johnson2013accelerating,xiao2014proximal,shalev2013stochastic}. Algorithmically, SVRG has inner loops and outer loops. At the beginning of each outer loop, SVRG defines a snap shot vector $\tilde{\theta}$ to be the average (or the last value) of the previous inner loop  and computes the full gradient $\nabla f(\tilde{\theta})$. In the inner loops, it randomly samples a data point $i$ and calculates the variance reduced gradient $$ \nabla f_i(\theta^{k})-\nabla f_i(\tilde{\theta})+\nabla f(\tilde{\theta}).$$ The wisdom of SVRG is by applying this variance reduction technique, the variance of the gradient estimation appraoches to zero, as opposed to   SGD where the variance does not diminish.

Note that SVRG is not a fully ``incremental" algorithm since it needs to calculate the full gradient once in each epoch. SAGA \cite{defazio2014saga}, another popular stochastic variance reduction method, avoids computing the full gradient by storing the historical gradients and then uses them to estimate the full gradient. To achieve this, it pays a price of higher memory demand ($n\times p$ where $p$ is the dimension in general). Nevertheless, in many machine learning problems, the storage demand can be reduced to $\mathcal{O} (n)$, which makes it practical. In these cases, SAGA performs equally or  better than SVRG \cite{defazio2014saga}.  Another merit particularly useful in practice, is that SAGA does not need to tune the length of the inner loop, as opposed to SVRG.  

\textbf{Summary of contribution:} In this paper, we adapt SVRG and SAGA on the non-convex formulation in binary classification and robust regression problems  and prove they converge exactly to the \textit{global optimum}   with \textit{linear} convergence rate by a novel analysis considering the statistical property of the problems. From a high level, we unify the statistics perspective and the optimization perspective for machine learning, with an emphasis on the interplay between them.  These two areas are traditionally studied separately, partly due to the fact that disparate communities developed them. We also improve the analysis of the batch gradient method in \citet{mei2016landscape}. We briefly state the main result and contribution of this paper in the following and leave the details and discussions in Section \ref{section:theory}.
\begin{itemize}
	\item We show the gradient complexity (i.e., the number of gradient evaluation required) of the batch gradient method  is $$\mathcal{O}\big( n (\frac{L}{\mu_0})^2 \log \frac{1}{\varepsilon} \big),$$ where  $L$ is the smoothness of the loss function $\ell(\theta; (x,y))$ and $\mu_0$ is a term similar  to the strong convexity parameter.  The gradient complexities of SVRG and SAGA are $$\mathcal{O}\big( (n +n^{2/3}(\frac{L}{\mu_0})^2) \log \frac{1}{\varepsilon} \big),$$ where $L$ and $\mu_0$ are the same as above. It is clear that, when the condition number $\frac{L}{\mu_0}$ and the number of samples $n$ are large, SVRG and SAGA converge much faster than the batch gradient method. 
	\item Conventional techniques to establish the convergence results for finite sample problems analyze $R_n(\theta)$ directly. The novelty in our analysis is that we firstly analyze $R(\theta)$  to exploit the favorable properties of $R(\theta)$, and then relate that to $R_n(\theta)$. The main challenges in our proofs  for SVRG and SAGA are, besides bounding the deviation between the finite sample problem and the population problem, we need to control the impact of non-convex terms in SVRG and SAGA.
	
	The novelty of our analysis, compared to those in \citet{mei2016landscape} which also makes use of the landscape of empirical loss , is that we exploit smoothness in the analysis.	In particular,  \citet{mei2016landscape} shows the gradient complexity of the batch gradient method is $\mathcal{O}\big( n (\frac{L_{lip}}{\mu_0})^2 \log \frac{1}{\varepsilon} \big)$, where $L_{lip}$ is the Lipschitz continuity parameter. Generally, this Lipschitz continuity parameter can be much larger than the smoothness parameter. Take $f(x)=x^2, \|x\|_2 \leq r $ as a example, the ratio can be as large as the radius $r$. More importantly,  since \citet{mei2016landscape} does not make use of smoothness, their proof technique can not be adapted to the stochastic variance reduction method. 
\end{itemize}


\section*{Related work}

Optimizing non-convex objective functions by the batch gradient and SGD are well studied \cite{nesterov1983method,ghadimi2013stochastic}. The criterion on the convergence is $\|\nabla f(\theta)\|_2^2\leq \varepsilon$ in the smooth and non-constrained problem, and in the constrained or non-smooth regularized case, the Gradient mapping  $ \|\mathcal{G}(\theta)\|_2^2\leq \varepsilon $ is used.  The gradient complexity is shown to be $ \mathcal{O} (\frac{L}{\varepsilon}+\frac{L\sigma^2}{\varepsilon})$ for SGD \cite{ghadimi2013stochastic}, and $\mathcal{O}(\frac{nL}{\varepsilon})$ for gradient descent.

Restricted Strong Convexity (RSC) \cite{negahban2009unified,agarwal2010fast} is a powerful tool to analyze  non-strongly convex and non-convex optimization problems using statistical information in the high dimensional setup. Under RSC, it has been shown that the batch gradient and the stochastic variance reduction methods converge to the global optimum {\em up to the statistical tolerance} \cite{loh2013regularized,qu2017linear,qu2017saga}. The gradient and stochastic variance reduction gradient can reach such tolerance with linear rate. These results  cover problems including Lasso, logistic regression, generalized linear model with non-convex regularization. Compared to these results, our result differs in two ways. First,  our result shows convergence to the exact global optimum rather than its neighborhood.   Second, the conditions of loss functions studied  are different. In their work, the loss function is convex (e.g., the squared loss in Lasso) and thus the whole objective function is either convex or slightly non-convex due to the $ -\frac{\mu}{2} \|\theta\|_2^2$ term in the non-convex  regularization. In our work, each $\ell(\theta; (x_i,y_i))$ is non-convex and RSC does not hold.

There is a vast number of research on stochastic variance reduction methods in the last several years and we   list here a few most relevant ones, see \cite{johnson2013accelerating,defazio2014saga,shalev2013stochastic,schmidt2017minimizing}. In general, if the objective function is $\mu$ strongly convex and the loss function is $L$ smooth, the rate (gradient complexity) is $\mathcal{O} \big( (n+ \frac{L}{\mu})\log \frac{1}{\varepsilon} \big)$  and can be accelerated to $\mathcal{O} \big( (n+ \sqrt{n\frac{L}{\mu}  }) \log\frac{1}{\varepsilon}  \big)$ \cite{lin2015universal,lan2015optimal,zhang2015stochastic}. If the objective function is non-convex, these algorithms converge to a \textit{stationary} point with a sub-linear rate \cite{allen2016variance,reddi2016stochastic}. In  stark contrast,  our paper shows   linear convergence to the global optimum. \citet{shalev2016sdca} proposed the dual-free SDCA algorithm that converges with rate $\mathcal{O} \big((n+\frac{L^2}{\mu^2})\big) \log (1/\varepsilon)$, where each individual loss function can be non-convex but the objective function as a whole is $\mu $ strongly convex.  Recently, several papers have revisit an old idea called P-L (Polyak-{\L}ojasiewicz) condition \cite{polyak1963gradient},  a.k.a. gradient dominated functions, and  proved the linear convergence to the global optimum \cite{karimi2016linear,reddi2016stochastic}. In particular,  \citet{reddi2016proximal}  prove that SVRG and SAGA converges with a rate $\mathcal{O} \big( (n+n^{2/3}\rho) \log(1/\varepsilon)\big) $, where $\rho$ is a condition number depending on P-L condition. Our results are different, since examples we mentioned (binary classification and Tukey's bisquare loss) do not satisfy the PL condition.

\section{Problem setup and Assumption}

\textbf{Binary classification:} In the binary classification problem, we are given $n$ data points $\{ (y_i,x_i)\}_{ i=1,...,n}$, where $x_i\in \mathbb{R}^p$, and $y_i\in \{0,1\}$ is a target with probability $ \mathbb{P}(Y=1| X=x)=\zeta (\langle \theta^*,x\rangle)$,  where $\zeta: \mathbb{R}\rightarrow[0,1]$ is a threshold function, and $\theta^*$ is the true parameter.  We consider the following optimization problem to estimate $\theta^*$:
\begin{equation}\label{equ.binary}
\begin{split}
&\min_{\theta}\quad  R_n(\theta)\triangleq \frac{1}{n}\sum_{i=1}^{n} \big( y_i-\zeta (\langle \theta, x_i \rangle) \big)^2\\
&\text{subject to} \quad \|\theta\|_2\leq r.
\end{split}
\end{equation}

The aim is to find the optimal solution $\hat{\theta}$ of $R_n(\theta)$.
We make the following set of mild assumptions on the threshold function above  and  on the feature vectors, following \cite{mei2016landscape}.
\begin{assumption}\label{Assumption:binary}
	~
	\begin{itemize}
		\item  $\zeta(\alpha)$ is three time differentiable with $\delta'(\alpha)>0$ for all $\alpha$. Furthermore, there exists some constant $L_\zeta>0$ such that  $\max\{\|\zeta'\|_\infty, \|\zeta''\|_\infty,\|\zeta'''\|_\infty  \}\leq L_\zeta$.
		\item The feature vector $X$ is zero mean $\tau^2$ sub-gaussian, i.e., $\mathbb{E} \exp(\langle \lambda, X \rangle )\leq \exp(-\frac{\tau^2\|\lambda^2\|}{2})$.
		\item $\mathbb{E} XX^T\succeq \underline{\gamma}\tau^2 I_{p\times p}$ for some $0<\underline{\gamma}<1$. In other words, the feature vector spans all directions in $\mathbb{R}^p$.
	\end{itemize}	
\end{assumption}

Notice  above assumption on $\zeta$ is quite mild , e.g., it is satisfied by  $\zeta (\alpha)=\frac{1}{1+\exp(-\alpha)}$, which can be used in  neural networks.

\textbf{Robust regression:} We assume the data generation model is $y_i=\langle \theta^*,x_i\rangle+\epsilon_i$. The noise term $\epsilon_i$ are zero mean and i.i.d.. Notice the noise can be very large, e.g., $\epsilon_i$ is sampled from Gaussian mixture distribution $ (1-\delta) \mathcal{N}(0,1)+\delta \mathcal{N}(0,\sigma^2)$ with large $\sigma$, where $\delta$ controls the percentage of large noise. We consider the robust regression in the following form. 
\begin{equation}\label{equ.robust_regression}
\begin{split}
& \min_{\theta} R_n(\theta)\equiv\frac{1}{n} \sum_{i=1}^{n} \rho (y_i-\langle \theta, x_i \rangle ),\\
&\text{subject to} \quad \|\theta\|_2\leq r.
\end{split}
\end{equation}

\begin{assumption}\label{Assumption:robust_regression}
	We define the score function $\psi(z)=\rho'(z)$ and follow the assumption in \cite{mei2016landscape}.
	\begin{itemize}
		\item The score function $\psi(z)$ is twice differentiable and odd in z with $\psi(z)\geq 0$ for all $z\geq 0$. Similar to the binary classification case, we need $ \max \{ \|\psi\|_\infty,\|\psi'\|_\infty,\|\psi''\|_\infty \}\leq L_\psi $.
		\item The feature vector $X$ is zero mean and $\tau^2$ sub-Gaussian random vector, and $\mathbb{E} XX^T\succeq \underline{\gamma}\tau^2 I_{p\times p}$, for some $0<\underline{\gamma}<1$.
		\item The noise $\epsilon$ has a symmetric distribution ($\epsilon$ and $-\epsilon$ have same distribution). Define $g(z)\equiv E_\epsilon (\psi(z+\epsilon))$,  and we have $g(z)>0$ for all $z>0$ and $g'(0)>0$. 
	\end{itemize}
	Remarks: The condition on $g(z)$ is mild .  It is not hard to show it is satisfied if the noise has a density that is strictly positive for all $\epsilon$, and decreasing for $\epsilon>0$.  	
\end{assumption}

\section{Algorithm and Analysis}

The batch gradient descent is the standard one where 
$$ \theta^{k+1}=\theta^k-\eta \nabla R_n(\theta^k)$$
The step size $\eta$ is specified later in the theorem.

We list the algorithm of SVRG and SAGA for completeness. Algorithm \ref{alg:svrg} is  the vanilla SVRG, and  we call Algorithm \ref{alg:svrg} as a subroutine with $J$ times in Algorithm \ref{alg:non-convex-svrg}. Algorithm \ref{alg:non-convex-SAGA} is non-convex SAGA which call algorithm \ref{alg:SAGA} $J$ times as a subroutine, where we follow minibach version of SAGA in \cite{reddi2016proximal}.
\begin{algorithm}[h]
	\caption{SVRG ($\theta^0, T,m,\eta$) }
	\label{alg:svrg}
	\begin{algorithmic}
		\STATE {\bfseries Input:} Total  steps $T$, length of epoch m, $S=\lceil T/m \rceil$, step size $\eta$, initialization $\theta^0$. Set $\tilde{\theta}^0=\theta_m^0=\theta^0$.
		\FOR{$s=0,1,...S-1$}	
		\STATE{ $\theta_0^{s+1}=\theta_m^s$, $\tilde{v}^{s+1}=\frac{1}{n}\sum_{i=1}^{n}\nabla l_i(\tilde{\theta}^s)$  }
		\FOR {$k=0$ \bfseries to $m-1$}
		\STATE {Pick $i_k$ uniformly random from $\{1,...,n\}$}
		\STATE  $v_{k}^{s+1}=\nabla l_{i_k}(\theta_{k}^{s+1})-\nabla l_{i_k}(\tilde{\theta}^s )+\tilde{v}^{s+1}$
		\STATE $\theta_{k+1}^{s+1}= \theta_k^{s+1}-\eta v_k^{s+1}$ 
		\ENDFOR    
		\STATE $\tilde{\theta}^{s+1}=\theta_m^{s+1}$
		\ENDFOR
		\STATE Uniform randomly choose $\theta^j$ from $\{\{\theta_k^{s+1}\}_{k=0}^{m-1} \}_{s=0}^{S-1} $	
	\end{algorithmic}
\end{algorithm}

\begin{algorithm}[h]
	\caption{ non-convex SVRG}
	\label{alg:non-convex-svrg}
	\begin{algorithmic}
		\STATE {\bfseries input:} Total step T, epoch length m, step size $\eta$
		\FOR {$j=0$ \bfseries to $J-1$}
		\STATE $ \theta^j=SVRG(\theta^{j-1}, T,m,\eta) $
		\ENDFOR
		\STATE {output: $\theta^J$}
	\end{algorithmic}
\end{algorithm}

\begin{algorithm}[h]
	\caption{SAGA ($\theta^0, K,\eta,b$) }
	\label{alg:SAGA}
	\begin{algorithmic}
		\STATE {\bfseries Input:} Total  steps $K$, sample size $b$, step size $\eta$, initialization $\theta^0$, $\alpha_i^0=\theta_0$ for $i\in \{1,...,n\}$, $g^0=\frac{1}{n}\sum_{i=1}^{n} \nabla \ell_i(\alpha_i^0)$.
		\FOR{$k=0,...,K-1$}	
		\STATE{ Uniformly randomly pick set $I_k$,$J_k$  from $\{1,...,n\}$ with $|I_k|=|J_k|=b$ }.
		\STATE  $v_{k}=\frac{1}{b} \sum_{i_k\in I_k} (\nabla\ell_{i_k}(\theta^k)-\nabla \ell_{i_k} (\alpha_{i_k}^k))+g^k.$
		\STATE $\theta^{k+1}=\theta^k-\eta v_k$.
		\STATE $\alpha_j^{k+1}=\theta^k$ for $j\in J_k$ and $\alpha_j^{k+1}=\alpha_j^k$ for $j\notin J_k$.
		\STATE $g^{k+1}=g^{k}-\frac{1}{n}\sum_{j_k\in J_k} (\nabla \ell_{j_k}(\alpha_{j_k}^k)-\nabla \ell_{j_k}(\alpha_{j_k}^{k+1})
		)$
		\ENDFOR    
		\STATE Uniform randomly choose $\theta_j$ from $\{\theta^k\}_{k=0}^{K-1}$
	\end{algorithmic}
\end{algorithm}

\begin{algorithm}[h]
	\caption{ non-convex SAGA}
	\label{alg:non-convex-SAGA}
	\begin{algorithmic}
		\STATE {\bfseries input:} Total step K,  step size $\eta$, sample size $b$
		\FOR {$j=0$ \bfseries to $J-1$}
		\STATE $ \theta_j=SAGA(\theta_{j-1}, K,\eta,b) $
		\ENDFOR
		\STATE {output: $\theta_J$}
	\end{algorithmic}
\end{algorithm}

\subsection{Theoretical results}\label{section:theory}

Before we present our main theorems, we brief recall the concentration properties of gradient and Hessian of the empirical loss, since it may give some intuitions why the algorithm works. In our proof, we will use them from time to time. 
\begin{theorem}[Theorem 1 in \cite{mei2016landscape}]\label{Theorem:population}
	Under the assumptions in Binary classification and Robust regression problem, there exists some absolute positive constant $C,C_0, C_1,C_2$, such that following holds ,
	
	\begin{itemize}
		\item The sample gradient converges uniformly to the population gradient in Euclidean norm, i.e., when $n\geq Cp\log n$, we have	
		\begin{align*}
		&\mathbb{P} \left( sup_{\theta\in B^p(r)} \|\nabla R_n (\theta)-\nabla R(\theta)\|_2\leq \tau \sqrt{\frac{Cp \log n}{n}} \right)\\
		&\geq 1-\exp (-C_1 n).
		\end{align*}	
		\item The sample Hessian converges uniformly to the population Hessian in operator norm, i.e., when  $n\geq C_0p\log n$, we have
		\begin{align*}
		&\mathbb{P} \left( sup_{\theta\in B^p(r)} \|\nabla^2 R_n (\theta)-\nabla^2 R(\theta)\|_{op}\leq \tau^2 \sqrt{\frac{C_0p \log n}{n}} \right)\\
		&\geq 1-\exp (-C_2 n).
		\end{align*}
	\end{itemize} 
\end{theorem}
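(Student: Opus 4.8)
The plan is to prove both statements by a standard empirical--process argument: establish pointwise concentration of the deviation at a fixed parameter, then upgrade to a uniform bound over $B^p(r)$ by a covering (net) argument combined with a high--probability Lipschitz bound on the deviation map. I describe the gradient statement in detail; the Hessian statement is entirely analogous with one extra derivative.

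\textbf{Step 1 (boundedness of the summands).} In both models the per--sample gradient is controlled by $\|x_i\|_2$ alone. For binary classification $\nabla\ell(\theta;(x,y))=-2\big(y-\zeta(\langle\theta,x\rangle)\big)\zeta'(\langle\theta,x\rangle)\,x$, so $\|\nabla\ell\|_2\le 2L_\zeta\|x\|_2$ since $|y-\zeta|\le 1$; for robust regression $\nabla\ell(\theta;(x,y))=-\psi(y-\langle\theta,x\rangle)\,x$ with $\|\psi\|_\infty\le L_\psi$, so $\|\nabla\ell\|_2\le L_\psi\|x\|_2$ --- in particular the (possibly heavy--tailed) noise enters only through the bounded function $\psi$ and causes no difficulty. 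Since $X$ is zero--mean $\tau^2$--sub-Gaussian, $\|X\|_2$ concentrates around $\sqrt p\,\tau$ with $\mathbb E\|X\|_2^2\lesssim p\tau^2$, and the centered summands $Z_i(\theta)=\nabla\ell_i(\theta)-\mathbb E\nabla\ell_i(\theta)$ are sub-Gaussian vectors with coordinate parameter $\lesssim\tau$ (up to the absolute constants $L_\zeta,L_\psi$, which I fold into the constant $C$).

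\textbf{Step 2 (pointwise concentration).} For fixed $\theta$, $\nabla R_n(\theta)-\nabla R(\theta)=\frac1n\sum_{i=1}^n Z_i(\theta)$. Taking a $\tfrac12$--net $\mathcal N$ of the sphere $S^{p-1}$ with $|\mathcal N|\le 5^p$ gives $\|\tfrac1n\sum_i Z_i(\theta)\|_2\le 2\sup_{u\in\mathcal N}\tfrac1n\sum_i\langle u,Z_i(\theta)\rangle$, and each inner term is an average of i.i.d.\ mean-zero sub-Gaussian scalars; a Hoeffding/Bernstein bound plus a union bound over $\mathcal N$ yields $\mathbb P\big(\|\nabla R_n(\theta)-\nabla R(\theta)\|_2\ge t\big)\le 5^p\exp(-cnt^2/\tau^2)$. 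For the Hessian statement one replaces $\langle u,Z_i\rangle$ by $u^\top W_i(\theta)u$ where $W_i=\nabla^2\ell_i-\mathbb E\nabla^2\ell_i$ is a mean-zero sub-exponential symmetric matrix with $\|W_i\|_{op}\lesssim\|x_i\|_2^2$ (here one uses $\|\zeta''\|_\infty\le L_\zeta$, resp.\ $\|\psi'\|_\infty\le L_\psi$), and applies a matrix Bernstein inequality to get $\mathbb P\big(\|\nabla^2 R_n(\theta)-\nabla^2 R(\theta)\|_{op}\ge t\big)\le 9^p\exp\!\big(-cn\min\{t^2/\tau^4,t/\tau^2\}\big)$.

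\textbf{Step 3 (Lipschitz bound on the deviation) and Step 4 (net over $B^p(r)$).} The map $\theta\mapsto\nabla R_n(\theta)-\nabla R(\theta)$ is Lipschitz with constant $\sup_\theta\|\nabla^2R_n(\theta)-\nabla^2R(\theta)\|_{op}\le\|\nabla^2R_n(\theta)\|_{op}+\|\nabla^2R(\theta)\|_{op}\lesssim \frac1n\sum_i\|x_i\|_2^2+p\tau^2$, which by concentration of $\frac1n\sum_i\|x_i\|_2^2$ is at most $L_1p\tau^2$ on an event $\mathcal E$ with $\mathbb P(\mathcal E)\ge1-\exp(-c_1n)$ (for the Hessian version one instead bounds the third derivatives via $\|\zeta'''\|_\infty\le L_\zeta$, resp.\ $\|\psi''\|_\infty\le L_\psi$, giving a Lipschitz constant $\lesssim\frac1n\sum_i\|x_i\|_2^3$, again bounded on such an event). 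Now take an $\varepsilon$--net $\mathcal N_\varepsilon$ of $B^p(r)$ with $|\mathcal N_\varepsilon|\le(3r/\varepsilon)^p$ and $\varepsilon\asymp \tau\sqrt{p\log n/n}\,/(L_1p\tau^2)$, apply Step 2 with $t=\tfrac12\tau\sqrt{Cp\log n/n}$ at every point of $\mathcal N_\varepsilon$, and union bound: the failure probability is $\le 5^p(3r/\varepsilon)^p\exp(-\tfrac12cCp\log n)$, which for $C$ a large enough absolute constant is $\le\exp(-C_1n)$ whenever $n\ge Cp\log n$. On the intersection with $\mathcal E$, for any $\theta\in B^p(r)$ with nearest net point $\theta'$ one has $\|\nabla R_n(\theta)-\nabla R(\theta)\|_2\le\|\nabla R_n(\theta')-\nabla R(\theta')\|_2+L_1p\tau^2\varepsilon\le\tau\sqrt{Cp\log n/n}$ by the choice of $\varepsilon$, which is the claim.

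\textbf{Main obstacle.} The delicate part is the bookkeeping in Steps 3--4. The Lipschitz constant is itself a random quantity (an average of $\|x_i\|_2^2$, or $\|x_i\|_2^3$ for the Hessian bound), not bounded almost surely, so one must show it concentrates \emph{and} that its failure probability is $\exp(-\Theta(n))$ --- exponentially small in $n$, matching the stated bound --- whereas the net--union bad event is only $\exp(-\Theta(p\log n))$, which is $\le\exp(-\Theta(n))$ precisely under the hypothesis $n\ge Cp\log n$. One must also tune the net resolution $\varepsilon$ to be fine enough (polynomially small in $n$) that discretization does not inflate the rate, yet coarse enough that $p\log(1/\varepsilon)\lesssim p\log n$; this trade-off is exactly what produces the $\sqrt{p\log n/n}$ rate rather than $\sqrt{p/n}$.
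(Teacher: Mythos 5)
This theorem is not proved in the paper at all: it is imported verbatim as ``Theorem 1 in \cite{mei2016landscape}'' and used as a black box, so there is no in-paper proof to compare against. Your strategy --- pointwise sub-Gaussian (resp.\ sub-exponential/matrix-Bernstein) concentration at a fixed $\theta$, an $\varepsilon$-net over $B^p(r)$, and a high-probability Lipschitz bound on $\theta\mapsto\nabla R_n(\theta)-\nabla R(\theta)$ obtained from one more derivative --- is exactly the argument used in Mei, Bai and Montanari's own proof, and Steps 1--2 (boundedness of the summands via $\|\zeta\|_\infty,\|\zeta'\|_\infty$ or $\|\psi\|_\infty$, the half-net reduction of the vector norm, the sub-exponential treatment of the Hessian) are correct as written.

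There is, however, a genuine error in your final bookkeeping, precisely at the point you flag as the ``main obstacle.'' You assert that the net-union failure probability $\exp(-\Theta(p\log n))$ is at most $\exp(-\Theta(n))$ under the hypothesis $n\ge Cp\log n$. The inequality goes the other way: $n\ge Cp\log n$ means $p\log n\le n/C$, hence $\exp(-\Theta(p\log n))\ge\exp(-\Theta(n)/C)$, so the bad event you control is \emph{larger} than the $\exp(-C_1 n)$ target, not smaller. This is not a fixable slip within your framework: at deviation scale $t=\tau\sqrt{Cp\log n/n}$ the single-point failure probability of an average of $n$ i.i.d.\ sub-Gaussian terms is already of order $\exp(-cnt^2/\tau^2)=\exp(-cCp\log n)$, which cannot be driven down to $\exp(-C_1 n)$ for an absolute constant $C$ once $n\gg p\log n$. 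What your argument actually delivers is the bound with probability $1-\exp(-c\,p\log n)=1-n^{-cp}$, which is the form of the guarantee in the cited source; the $1-\exp(-C_1 n)$ appearing in the statement as transcribed here is stronger than what this (or any moderate-deviation) argument can give, so you should either prove the $1-n^{-cp}$ version and say so, or not claim the reduction via the inverted inequality. Everything else --- the choice of $\varepsilon$ polynomially small in $n$ so that $p\log(1/\varepsilon)\lesssim p\log n$, and the $\exp(-cn)$ control of the random Lipschitz constant $\tfrac1n\sum_i\|x_i\|_2^2$ --- is sound.
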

Theorem \ref{Theorem:population} essentially says the gradient and Hessian are close to their population counterparts, if $n\geq p\log n$. For the population loss, It can be shown following the gradient $\nabla R(\theta)$, the gradient decent algorithm converges to the ground truth $\theta^*$, i.e., the optimal solution of $R(\theta)$, even though $R(\theta)$ may be non-convex. Thus, although it is hard to directly analyze the empirical loss, we can exploit the error bound in Theorem \ref{Theorem:population} and wish a similar convergence result. Particularly, since Theorem \ref{Theorem:population} is a uniform converges result, by subtly control the $\|\nabla R_n(\theta_k)-\nabla R(\theta_k)\|_2$ in the algorithm, we can prove the convergence of $\theta^k$.

The next theorem is our main result for binary classification. For ease of exposition in the SVRG and SAGA, in our analysis we assume that $\frac{L^2}{\mu_0^2}\geq n^{1/3}$, a property analogous to the high condition number for strongly convex functions in machine leaning \cite{xiao2014proximal,reddi2016stochastic}.
\begin{theorem}[ Binary classification]\label{Theorem:Binary}
	Let $\hat{\theta}$ be the global optimal solution of Equation \eqref{equ.binary}, and $L$ be the smoothness of loss function $\ell(\theta; (x_i,y_i))$.
	Suppose $\|\theta^*\|\leq \frac{r}{3}$, and Assumption \ref{Assumption:binary} is satisfied, then there exists a positive constant $\mu_0$ that only depends on $L_\zeta, \underline{\gamma}, \tau $, such that	if the sample size $n\geq Cp\log n$, the following   holds with probability at least $1-\exp(-C_1n)$ for some absolute positive constant $C,C_1$:
	\begin{itemize}
		\item Set $\eta=\frac{1}{2L}$ in the batch gradient method,  the algorithm converges to $\hat{\theta}$ with gradient complexity  $\mathcal{O} \big( n(\frac{L}{\mu_0})^2 \log(\frac{1}{\varepsilon})  \big) $.
		\item For SVRG, suppose $\frac{L^2}{\mu_0^2}\geq n^{1/3}$,  we set $T=\lceil C_2 n^{2/3} \frac{L^2}{\mu_0^2} \rceil$ for some absolute positive constant $C_2$, $\eta=\frac{2}{5Ln^{2/3}}$, $m=\lfloor1.25 n \rfloor$, then the algorithm converges to $\hat{\theta}$ with gradient complexity $ \mathcal{O} \big( (n+n^{2/3}\frac{L^2}{\mu_0^2})\log (1/\varepsilon)   \big) $.
		\item For SAGA, suppose $\frac{L^2}{\mu_0^2}\geq n^{1/3},$ we set minibatch size $b=\lceil n^{2/3} \rceil, $ $\eta=\frac{1}{5L}$, $K=\lceil \frac{15L^2}{\mu_0^2} \rceil$, then the algorithm converges to $\hat{\theta}$ with gradient complexity $ \mathcal{O} \big( (n+n^{2/3}\frac{L^2}{\mu_0^2})\log (1/\varepsilon)   \big) $.
	\end{itemize}
\end{theorem}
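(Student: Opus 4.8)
The plan is to first pin down the benign geometry of the population loss $R(\theta)=\mathbb{E}[(Y-\zeta(\langle\theta,X\rangle))^2]$ and then transport it to $R_n$ via Theorem~\ref{Theorem:population}. Writing $a=\langle\theta,X\rangle$ and $b=\langle\theta^*,X\rangle$, the integrand of $\langle\nabla R(\theta),\theta-\theta^*\rangle$ is $2(\zeta(a)-\zeta(b))\,\zeta'(a)\,(a-b)$, which is \emph{pointwise} nonnegative because $\zeta$ is increasing and, crucially under Assumption~\ref{Assumption:binary}, $\zeta'>0$ everywhere. Restricting to the event $\{|a|,|b|\le M\}$ (whose complement carries negligible mass by sub-Gaussianity of $X$, with $M$ a multiple of $\tau r$), on which $\zeta'(a)$ and the divided difference $(\zeta(a)-\zeta(b))/(a-b)$ are bounded below by constants depending only on $L_\zeta,M$, and using $\mathbb{E}XX^\top\succeq\underline{\gamma}\tau^2I$, I would obtain a one-point strong convexity estimate $\langle\nabla R(\theta),\theta-\theta^*\rangle\ge 2\mu_0\|\theta-\theta^*\|_2^2$ valid on all of $B^p(r)$, with $\mu_0=\mu_0(L_\zeta,\underline{\gamma},\tau)>0$. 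A similar computation on the Hessian $\nabla^2R(\theta)=2\mathbb{E}\big[\big(\zeta'(a)^2+(\zeta(a)-\zeta(b))\zeta''(a)\big)XX^\top\big]$ shows that the sign-indefinite term is dominated by $2\zeta'(a)^2XX^\top$ once $\|\theta-\theta^*\|_2$ is small, so there is a fixed radius $\delta_0>0$ with $\nabla^2R(\theta)\succeq 2\mu_0 I$ on $B^p(\theta^*,\delta_0)$.

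Next I would transfer these facts to $R_n$. Set $\epsilon_{\mathrm{stat}}:=\tau\sqrt{Cp\log n/n}$. On the high-probability event of Theorem~\ref{Theorem:population}, the one-point bound plus the gradient deviation estimate gives $\langle\nabla R_n(\theta),\theta-\theta^*\rangle\ge 2\mu_0\|\theta-\theta^*\|_2^2-\epsilon_{\mathrm{stat}}\|\theta-\theta^*\|_2$; since $\|\theta^*\|_2\le r/3$ this expression is strictly positive on $\partial B^p(r)$ once $n$ is large, which together with the KKT conditions rules out $\hat\theta\in\partial B^p(r)$. Hence $\hat\theta$ is interior, $\nabla R_n(\hat\theta)=0$, and $\|\hat\theta-\theta^*\|_2\le\epsilon_{\mathrm{stat}}/(2\mu_0)$. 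Combining the one-point bound for $\theta$ far from $\hat\theta$ (where the $\epsilon_{\mathrm{stat}}$ slack is negligible against $\mu_0\|\theta-\hat\theta\|_2^2$) with the Hessian deviation estimate on $B^p(\theta^*,\delta_0)$, which for $n$ large gives $\nabla^2R_n\succeq\mu_0 I$ there and hence genuine strong convexity in a neighbourhood of $\hat\theta$, I would establish a restricted strong convexity around $\hat\theta$,
\[
\langle\nabla R_n(\theta),\theta-\hat\theta\rangle\ \ge\ \mu_0\,\|\theta-\hat\theta\|_2^2\qquad\text{for all }\theta\in B^p(r).
\]
Because each $\ell_i$, and therefore $R_n$, is $L$-smooth, this yields $\|\nabla R_n(\theta)\|_2\ge\mu_0\|\theta-\hat\theta\|_2$ and $R_n(\theta)-R_n(\hat\theta)\le\tfrac L2\|\theta-\hat\theta\|_2^2$, i.e.\ the Polyak--\L ojasiewicz inequality $\|\nabla R_n(\theta)\|_2^2\ge(2\mu_0^2/L)\,\big(R_n(\theta)-R_n(\hat\theta)\big)$ on $B^p(r)$; this is exactly the point at which smoothness, rather than the Lipschitz constant used in \cite{mei2016landscape}, enters.

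Given the PL inequality, the three rates follow from standard arguments for gradient-dominated objectives, once one also checks that the iterates never leave $B^p(r)$ (so that the bounds above remain in force). For batch gradient descent with $\eta=1/(2L)$ the descent lemma gives $R_n(\theta^{k+1})\le R_n(\theta^k)-\tfrac{3}{8L}\|\nabla R_n(\theta^k)\|_2^2$, which combined with PL produces a per-step contraction factor $1-\Theta(\mu_0^2/L^2)$ on $R_n(\theta^k)-R_n(\hat\theta)$; hence $O((L/\mu_0)^2\log(1/\varepsilon))$ iterations and $O(n(L/\mu_0)^2\log(1/\varepsilon))$ gradient evaluations, the extra factor $L/\mu_0$ relative to the strongly convex rate being the price of non-convex summands (no co-coercivity). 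For SVRG and SAGA I would invoke the gradient-dominated analysis of \cite{reddi2016proximal}: with the stated parameters, one call of Algorithm~\ref{alg:svrg} ($T=\lceil C_2 n^{2/3}L^2/\mu_0^2\rceil$, $\eta=\tfrac{2}{5Ln^{2/3}}$, $m=\lfloor 1.25 n\rfloor$), respectively one call of Algorithm~\ref{alg:SAGA} ($b=\lceil n^{2/3}\rceil$, $\eta=\tfrac1{5L}$, $K=\lceil 15L^2/\mu_0^2\rceil$), contracts $\mathbb{E}[R_n-R_n(\hat\theta)]$ by a constant factor at a cost of $O(n+n^{2/3}L^2/\mu_0^2)$ gradients; running the outer loops of Algorithm~\ref{alg:non-convex-svrg} / Algorithm~\ref{alg:non-convex-SAGA} $J=O(\log(1/\varepsilon))$ times then yields the claimed $O\big((n+n^{2/3}(L/\mu_0)^2)\log(1/\varepsilon)\big)$ complexity, the hypothesis $L^2/\mu_0^2\ge n^{1/3}$ being used to absorb the $O(n)$ full-gradient overhead.

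I expect two steps to carry the real difficulty. The first is the restricted strong convexity around $\hat\theta$: since $R$ is only \emph{locally} strongly convex, this bound must be assembled from the $\theta^*$-anchored one-point estimate far out and the $\hat\theta$-anchored curvature estimate near in, which pins the sample threshold to be large enough that $\epsilon_{\mathrm{stat}}=\tau\sqrt{Cp\log n/n}$ is smaller than the fixed radius $\delta_0$. The second, and the genuinely new ingredient compared with the convex setting, is keeping every iterate inside $B^p(r)$: for batch GD this follows from the inward-pointing property of $\nabla R_n$ on $\partial B^p(r)$ inherited from $\nabla R$, but for SVRG and SAGA the inner-loop iterates may temporarily drift, so I would bootstrap on the magnitude of the variance-reduced update $v_k$ (bounded by $L\|\theta_k-\tilde\theta\|_2+L\|\tilde\theta-\hat\theta\|_2$, small once the snapshot or gradient table is already close to $\hat\theta$) and, in parallel, control the non-convex cross terms appearing in the SVRG/SAGA Lyapunov function using $L$-smoothness of the $\ell_i$ and the operator-norm closeness of $\nabla^2R_n$ to the benign $\nabla^2R$.
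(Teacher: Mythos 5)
Your proposal is correct in its essentials and reaches the same rates, but it reorganizes the argument in a genuinely different way from the paper. The paper never forms a single Polyak--{\L}ojasiewicz inequality for $R_n$: it splits $B^p(0,r)$ into $B^p(0,r)\setminus B^p(\theta^*,\epsilon_0)$ and $B^p(\theta^*,\epsilon_0)$, runs the descent/Lyapunov recursion on the \emph{population} loss $R$ anchored at $\theta^*$ in the outer region (so every step of the batch, SVRG and SAGA analyses carries explicit deviation terms $\epsilon\|\nabla R(\theta^k)\|_2$ and $\epsilon^2$, absorbed only because $\|\nabla R\|_2\geq \underline{L}_0$ there by Lemma \ref{Lemma.bound_gradient_binary}), converts $\theta^*$-convergence to $\hat\theta$-convergence via the crude containment $\|\hat\theta-\theta^*\|_2\leq\epsilon_0/2$, and then redoes the whole recursion for $R_n$ inside the strongly convex core. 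You instead localize $\hat\theta$ sharply ($\|\hat\theta-\theta^*\|_2\lesssim\epsilon_{\mathrm{stat}}/\mu_0$ via the first-order optimality condition), glue the $\theta^*$-anchored one-point estimate to the local Hessian estimate into one inequality $\langle\nabla R_n(\theta),\theta-\hat\theta\rangle\geq\mu_0\|\theta-\hat\theta\|_2^2$ on all of $B^p(r)$, and then apply the off-the-shelf gradient-dominated analyses (the batch descent lemma and \cite{reddi2016proximal} for SVRG/SAGA) to $R_n$ directly, with no per-iteration deviation bookkeeping and no phase switch. What your route buys is modularity and a cleaner SVRG/SAGA proof — the stochastic gradients are components of $R_n$, so analyzing $R_n$ avoids the cross terms the paper must control; this is also consistent with \eqref{equ.pl}, which is exactly the same smoothness-plus-one-point-convexity trick applied to $R$ at $\theta^*$. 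What it costs is a slightly stronger sample-size requirement: the cross term $\langle\nabla R(\theta),\theta^*-\hat\theta\rangle$ is only $O(Lr\epsilon_{\mathrm{stat}}/\mu_0)$, not proportional to $\|\theta-\hat\theta\|_2^2$, so your global inequality needs $n$ large enough that this constant is dominated by $\mu_0\delta_0^2$ — you correctly flag this gluing as the delicate step. (The paper's remark that the setting ``does not satisfy the PL condition'' refers to the deterministic structure of the problem; it does not preclude deriving PL on the high-probability landscape event, which is effectively what both proofs do.) One caveat applies to both arguments equally: neither Algorithm \ref{alg:svrg} nor Algorithm \ref{alg:SAGA} projects onto $B^p(0,r)$, and the paper is silent on why the inner iterates remain where the landscape bounds hold; you at least identify this and sketch a bootstrap on $\|v_k\|$, which is a point in your favor rather than a gap relative to the paper.
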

Some remarks are in order.
\begin{itemize}
	\item We call $\frac{L_0}{\mu_0}$  an effective condition number, an analogy to that in the analysis of strongly convex function. Notice SVRG and SAGA has $n^{1/3}$ less dependence on $\frac{L^2}{\mu_0^2}$ than the batch gradient method.  When the problem is ill-conditioned ($\frac{L}{\mu_0}$ is large ), SVRG and SAGA are much faster than batched gradient method, which is verified in experiments reported in Section \ref{section:empirical_result}. 
	\item  In \citet{mei2016landscape}, the author proves gradient complexity of batch gradient method is $\mathcal{O} (n (\frac{L_{lip}^2}{\mu_0^2})\log(1/\varepsilon) )$, where $ L_{lip}$ is the Lipschitz continuity parameter, since it does not make use of the smoothness of the objective function. Compared to their results, ours is tighter, as $L_{lip}$ is larger than the smoothness parameter $L$ in general.
	\item Most variance reduction work on optimization of strongly convex functions in literature has a linear dependence on $\frac{L}{\mu_0}$, e.g., $n+\frac{L}{\mu_0}$. An open question is whether it is possible to improve our result to $\mathcal{O}\big( (n+n^{2/3} \frac{L}{\mu_0})\log (1/\varepsilon) \big). $ 
	\item Our setting does not satisfied the PL condition \cite{karimi2016linear,reddi2016proximal}, thus existing convergence results based on PL condition does not apply.
	\item  The requirement of $\|\theta\|_2\leq r$ is to ease the theoretically analysis. In practice, we find it not necessary.
\end{itemize}
The next theorem is our main result for robust regression. It is similar to that of the binary classification case except  dependence on $\psi$ and change of constants.
\begin{theorem}[Robust regression]\label{Theorem:Robust regression}
	Let $\hat{\theta}$ be the global optimal solution of Equation \eqref{equ.robust_regression},  and $L$ be the smoothness of loss function $\rho(y_i-\langle\theta, x_i \rangle)$.
	Suppose $\|\theta^*\|\leq \frac{r}{3}$, and Assumption \ref{Assumption:robust_regression} is satisfied. Then there exists a positive constant $\mu_0$ that only depends on $L_\psi, \underline{\gamma}, \tau $,	such that if the sample size $n\geq C_0p\log n$, the following results hold with probability $1-\exp(-C_1n)$, for some absolute positive constant $C_0,C_1$.
	\begin{itemize}
		\item  Set $\eta=\frac{1}{2L}$ in the batch gradient method,   the algorithm converge to $\hat{\theta}$ with gradient complexity  $\mathcal{O} \big( n(\frac{L}{\mu_0})^2 \log(\frac{1}{\varepsilon})  \big) $.
		\item For SVRG, suppose $\frac{L^2}{\mu_0^2}\geq n^{1/3}$,  we set $T=\lceil C_2 n^{2/3} \frac{L^2}{\mu_0^2} \rceil$ with some absolute positive constant $C_2$, $\eta=\frac{2}{5Ln^{2/3}}$, $m=\lfloor2.5 n \rfloor$, then the algorithm converges to $\hat{\theta}$ with gradient complexity $ \mathcal{O} \big( (n+n^{2/3}\frac{L^2}{\mu_0^2})\log (1/\varepsilon)   \big) $.
		\item For SAGA, suppose $\frac{L^2}{\mu_0^2}\geq n^{1/3},$ we set sample size $b=\lceil 2 n^{2/3} \rceil, $ $\eta=\frac{1}{5L}$, $K=\lceil \frac{15L^2}{\mu_0^2} \rceil$, then the algorithm converges to $\hat{\theta}$ with gradient complexity $ \mathcal{O} \big( (n+n^{2/3}\frac{L^2}{\mu_0^2})\log (1/\varepsilon)   \big) $.
\end{itemize} \end{theorem}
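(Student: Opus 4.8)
The plan is to prove Theorem~\ref{Theorem:Robust regression} by the same two-stage scheme used for binary classification (Theorem~\ref{Theorem:Binary}), replacing the threshold function $\zeta$ and its bound $L_\zeta$ by the score function $\psi$ and $L_\psi$, and enlarging the inner-loop length to $m=\lfloor 2.5n\rfloor$ and the minibatch to $b=\lceil 2n^{2/3}\rceil$ (a constant-factor change absorbing the larger per-sample second moment of $\nabla\rho(y-\langle\theta,x\rangle)$), while the rate-determining $L$ and $\mu_0$ are unchanged. \emph{Step 1 (population landscape).} Writing $g(z)=\mathbb{E}_\epsilon\,\psi(z+\epsilon)$, Assumption~\ref{Assumption:robust_regression} gives $\nabla R(\theta)=-\mathbb{E}_x[g(\langle\theta^*-\theta,x\rangle)\,x]$ and $\nabla^2 R(\theta)=\mathbb{E}_x[g'(\langle\theta^*-\theta,x\rangle)\,xx^{\top}]$, so $g$ odd with $g(z)>0$ for $z>0$ yields $\langle\nabla R(\theta),\theta-\theta^*\rangle=\mathbb{E}_x[g(t)t]\ge 0$ with $t=\langle x,\theta^*-\theta\rangle$; combining $g(z)\approx g'(0)z$ near $0$ with the sub-Gaussianity of $\langle x,u\rangle$ and $\mathbb{E}XX^{\top}\succeq\underline{\gamma}\tau^2 I$ produces, as in \citet{mei2016landscape}, a constant $\mu_0=\mu_0(L_\psi,\underline{\gamma},\tau)$ such that (a) $\theta^*$ is the unique stationary point of $R$ in $B^p(r)$ and $\|\nabla R(\theta)\|_2\ge\mu_0\|\theta-\theta^*\|_2$ there, and (b) $\nabla^2 R(\theta)\succeq \mu_0 I$ on a ball $\mathcal{N}$ around $\theta^*$ of constant radius; moreover each $\rho(y_i-\langle\cdot,x_i\rangle)$ is $L$-smooth (its Hessian is $\psi'(\cdot)\,x_ix_i^{\top}$) with $L$ as in the statement, so $R$ and $R_n$ are $L$-smooth.

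\textbf{Step 2 (transfer to the empirical loss).} Condition on the event of Theorem~\ref{Theorem:population}, so $\sup_\theta\|\nabla R_n-\nabla R\|_2\le\varepsilon_n:=\tau\sqrt{C_0 p\log n/n}$ and $\sup_\theta\|\nabla^2 R_n-\nabla^2 R\|_{\mathrm{op}}\le\tau^2\sqrt{C_0 p\log n/n}$. Since $\|\theta^*\|_2\le r/3$ and $\varepsilon_n$ is small, $\hat\theta$ is interior and $\nabla R_n(\hat\theta)=0$; plugging $\theta=\hat\theta$ into (a) gives $\|\hat\theta-\theta^*\|_2\le\varepsilon_n/\mu_0$, so once the constant in $n\ge C_0 p\log n$ is large enough, $\hat\theta\in\mathcal{N}$ and $\varepsilon_n/\mu_0$ is far below the radius of $\mathcal{N}$. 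The Hessian bound then makes $R_n$ $(\mu_0/2)$-strongly convex on a slightly shrunken $\mathcal{N}$ containing $\hat\theta$, while $\|\nabla R_n(\theta)\|_2\ge\mu_0\|\theta-\theta^*\|_2-\varepsilon_n$ on all of $B^p(r)$ and $R_n$ stays $L$-smooth.

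\textbf{Step 3 (convergence of the three algorithms).} Outside $\mathcal{N}$ I would run a descent argument: with $\eta=1/(2L)$ the smoothness inequality gives $R_n(\theta^{k+1})\le R_n(\theta^k)-\tfrac{3}{8L}\|\nabla R_n(\theta^k)\|_2^2$ for batch GD, so $R_n(\theta^k)$ is monotone and $\sum_k\|\nabla R_n(\theta^k)\|_2^2$ is bounded by $\tfrac{8L}{3}(R_n(\theta^0)-R_n(\hat\theta))=\mathcal{O}(L^2 r^2)$; together with $\|\nabla R_n(\theta)\|_2\ge\mu_0\|\theta-\theta^*\|_2-\varepsilon_n$ this forces some iterate into $\mathcal{N}$ within $\mathcal{O}((L/\mu_0)^2)$ steps, after which the strongly-convex step keeps the iterate in $\mathcal{N}$; inside $\mathcal{N}$, $R_n$ is $(\mu_0/2)$-strongly convex and $L$-smooth with minimizer $\hat\theta$, so $\|\theta^k-\hat\theta\|_2^2$ contracts by $1-\Theta(\mu_0/L)$ per step, giving overall gradient complexity $\mathcal{O}\big(n(L/\mu_0)^2\log(1/\varepsilon)\big)$. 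For SVRG and SAGA the same structure is driven by the standard non-convex variance-reduction Lyapunov functions $R_n(\theta)+c\|\theta-\tilde\theta\|_2^2$ and $R_n(\theta)+\tfrac{c}{n}\sum_i\|\alpha_i-\theta\|_2^2$ of \citet{reddi2016proximal}: with the stated $\eta,m,b$ one obtains $\tfrac1T\sum_t\mathbb{E}\|\nabla R_n(\theta_t)\|_2^2=\mathcal{O}\big(L(R_n(\theta^{\mathrm{in}})-R_n(\hat\theta))/(T\eta)\big)$; choosing $T=\lceil C_2 n^{2/3}L^2/\mu_0^2\rceil$ (resp. $K=\lceil 15L^2/\mu_0^2\rceil$) makes $T\eta=\Theta(L/\mu_0^2)$, which (after a constant number of outer calls has brought the iterate into $\mathcal{N}$, where the Polyak--{\L}ojasiewicz inequality $\|\nabla R_n(\theta)\|_2^2\ge\mu_0(R_n(\theta)-R_n(\hat\theta))$ holds) contracts $\mathbb{E}[R_n(\cdot)-R_n(\hat\theta)]$ by a fixed factor per call; hence $J=\mathcal{O}(\log(1/\varepsilon))$ outer calls of Algorithms~\ref{alg:svrg}/\ref{alg:SAGA} suffice, and since each call costs $\mathcal{O}(n+T)=\mathcal{O}(n+n^{2/3}L^2/\mu_0^2)$ gradient evaluations this gives the stated $\mathcal{O}\big((n+n^{2/3}L^2/\mu_0^2)\log(1/\varepsilon)\big)$. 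The step where smoothness rather than Lipschitz continuity is essential is precisely this coupling of the descent/Lyapunov bound (which needs $\|\nabla R_n\|_2$, not $\|\nabla R_n\|_2\le L_{lip}$) with the variance-reduction recursion.

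\textbf{Main obstacle.} The hard part will be Step 3 in the outer region and the interface between the two regions: one must show not merely that $\|\nabla R_n(\theta_t)\|_2$ becomes small but that the (possibly random) iterate provably enters \emph{and stays inside} the constant-radius strong-convexity ball $\mathcal{N}$ within the prescribed horizon — which forces the constant in $n\ge C_0 p\log n$ to be large enough that both $\varepsilon_n/\mu_0$ and the outer-region ``floor'' lie well inside $\mathcal{N}$, and forces one to verify that the unprojected SVRG/SAGA iterates never leave $B^p(r)$ so Theorem~\ref{Theorem:population} keeps applying — while carrying the non-convex variance-reduction potential analysis through in the regime where only the \emph{average} $R_n$, and none of the individual $\ell_i$, is (locally) strongly convex; this last point is where controlling the impact of the non-convex terms, and the extra constant factors in $m$ and $b$ relative to the binary-classification case, enter.
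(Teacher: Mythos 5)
Your proposal is correct and follows essentially the same route as the paper: the same two-region decomposition of $B^p(0,r)$ into $B^p(\theta^*,\epsilon_0)$ and its complement, the same population-landscape lemma (positive-definite Hessian near $\theta^*$ via $g'(0)>0$, the directional-gradient bound $\langle\theta-\theta^*,\nabla R(\theta)\rangle\ge\mu_0\|\theta-\theta^*\|_2^2$ via $g(z)z\ge 0$ and sub-Gaussian truncation), the same transfer via Theorem~\ref{Theorem:population}, and the same Reddi-et-al.\ Lyapunov functions for SVRG/SAGA; indeed the paper's own proof of this theorem is just "replace $\underline{L}_0,\mu_0,\kappa_0$ and repeat the binary-classification argument," which is exactly what you do. The one genuine (if minor) divergence is in the outer region: the paper runs the descent/Lyapunov recursion on the \emph{population} loss $R$, carrying explicit error terms $\epsilon\|\nabla R(\theta^k)\|_2$ and $\epsilon^2$ through the recursion and obtaining geometric decay of $R(\theta^k)-R(\theta^*)$ (and of $\mathbb{E}\|\nabla R(\theta^j)\|_2^2$ across outer calls), whereas you transfer the landscape to $R_n$ once ($\|\nabla R_n(\theta)\|_2\ge\mu_0\|\theta-\theta^*\|_2-\varepsilon_n$) and then analyze $R_n$ throughout, using a hitting-time count for batch GD. Both yield the same $\mathcal{O}\big(n(L/\mu_0)^2\log(1/\varepsilon)\big)$ and $\mathcal{O}\big((n+n^{2/3}L^2/\mu_0^2)\log(1/\varepsilon)\big)$ complexities; your bookkeeping gives a cleaner batch descent inequality (no error term, since GD follows $\nabla R_n$ exactly) at the cost of having to argue separately that the iterate enters and remains in the strong-convexity ball — a point the paper also glosses over ("suppose at step $K$ it comes into the region"), so you are not losing rigor relative to the published argument.
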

The same set of remarks in binary classification holds for robust regression as well. We also remark that 
the basic idea to prove the convergence in robust regression is same with that in binary classification, except that $\mu_0$ has a different dependence on $L_\psi,\underline{\gamma}$ and $\tau$.

\section{Roadmap of the proof}
We briefly explain the high-level idea of the proof while defer the details to the supplementary material.
The proof consists two steps. We take the batch gradient method as an example, since the analysis is relatively easy. The proof for stochastic variance reduction methods is similar, although more involved technically.

We divide the region $B^p(0,r)$, i.e., $\ell_2$ ball in the $p$ dimensional space, into two parts: $$ B^p (0,r)\backslash B^p (\theta^*, \epsilon_0)\quad \text{and} \quad B^p (\theta^*, \epsilon_0). $$
In the first step we focus on $ B^p (0,r)\backslash B^p (\theta^*, \epsilon_0)$. We analyze the objective function $R(\theta)$ (i.e., the population loss) rather than $R_n(\theta)$, such that we can exploit   good statistical properties on the population loss $R(\theta)$ (e.g, the directional gradient toward $\theta^* $ is larger than zero). However, notice   the algorithm only has access to the the gradient of finite sample loss, i.e., $\nabla R_n(\theta)$, rather than $\nabla R(\theta)$. Thus, since the algorithm follows the direction of $\nabla R_n(\theta)$, there are additional error terms on the objective function $R(\theta)$. Thanks to Theorem \ref{Theorem:population},  these terms can be bounded and are small when $n\geq p \log n$. Therefore, we can show in this region, $\theta$ converges toward $\theta^*$ with a linear rate.

The second step, where we analyze the region $\quad B^p (\theta^*, \epsilon_0)$,  is easier. In this region, the population Hessian $\nabla^2 R (\theta) $ can be shown positive definite. Then the empirical Hessian of $\nabla^2 R_n(\theta)$ can be bounded below using the uniform convergence result in Theorem \ref{Theorem:population}. Thus the objective function $R_n(\theta)$ behaves like a strongly convex function in this region, which leads to convergence to $\hat{\theta}$ (notice the uniqueness of $\hat{\theta}$ is proved in Theorem 4 of \citet{mei2016landscape}).

\section{Simulation result}\label{section:empirical_result}
In this section, we report numerical experiment results for SVRG, SAGA, batched gradient and SGD in both synthetic and real datasets.
\subsection{Synthetic dataset}
The aim of the  synthetic data experiment is to verify our main theorem:   SVRG and SAGA converge to the global optimum with linear rate, even in some non-convex settings; they converge faster than the batch gradient method. The step sizes for all algorithms are chosen by a grid search from $\{2^{-10},..., 2^1\}$.

\textbf{Binary classification:} The feature vector $X_i$ is generated from the normal distribution $\mathcal{N}(0,\Sigma).$   The label $Y_i$ is 0 or 1 with probability $P(Y_i=1|X_i=x)=\frac{1}{1+\exp(-\langle \theta^*,x \rangle)}$, where $\theta^*$ is the true parameter. We generate each entry of $\theta^*$ from the Bernoulli distribution with $p=0.5$ and normalize it such that $\|\theta^*\|_2=1.$ In the experiment, the number of data points $n=10000$, the dimension is set as either $p=500$ or $300$. The optimal solution is obtained by running SVRG long enough (e.g., 1000 passes over dataset). We choose two different settings of $\Sigma$, which will affect the condition number of the objective function. In the first experiment $\lambda_{\max}(\Sigma)/\lambda_{\min}(\Sigma)=10 $ and  in the last two experiments $\lambda_{\max}(\Sigma)/\lambda_{\min}(\Sigma)=1000 $. The experiment results are presented in Figure \ref{fig:binary}.

\begin{figure*}
	\includegraphics[width=0.33\textwidth]{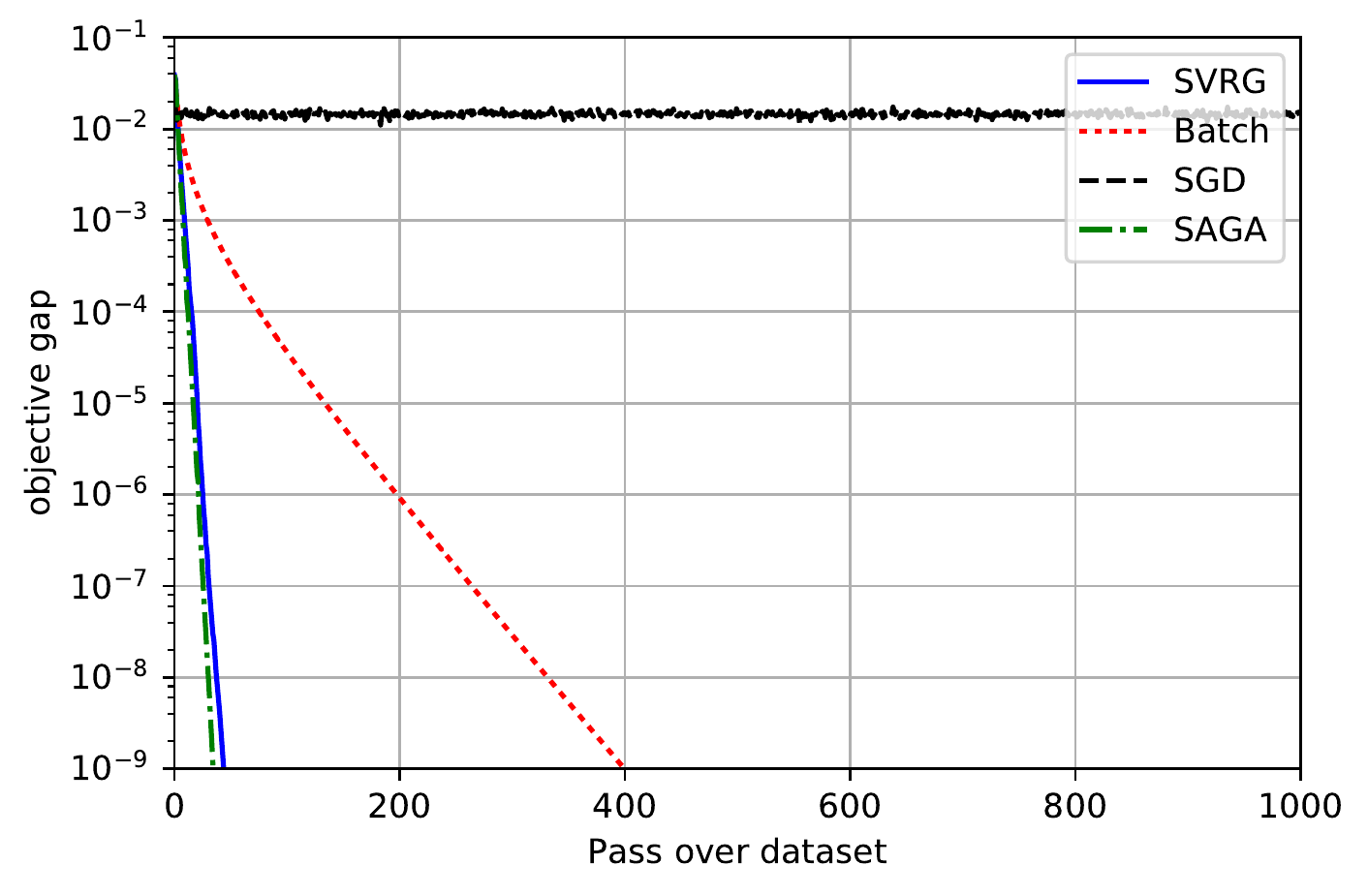}\includegraphics[width=0.33\textwidth]{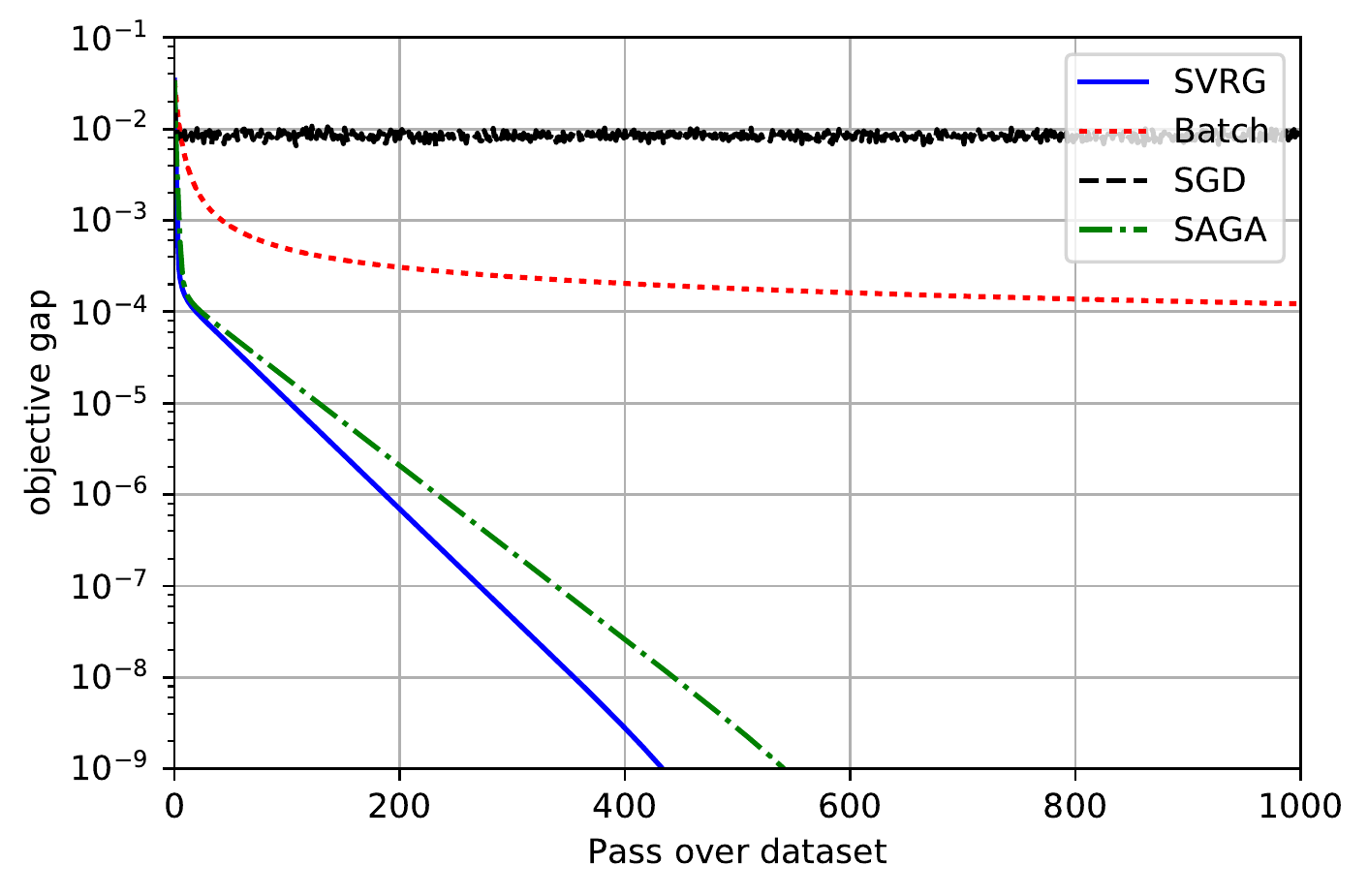}\includegraphics[width=0.33\textwidth]{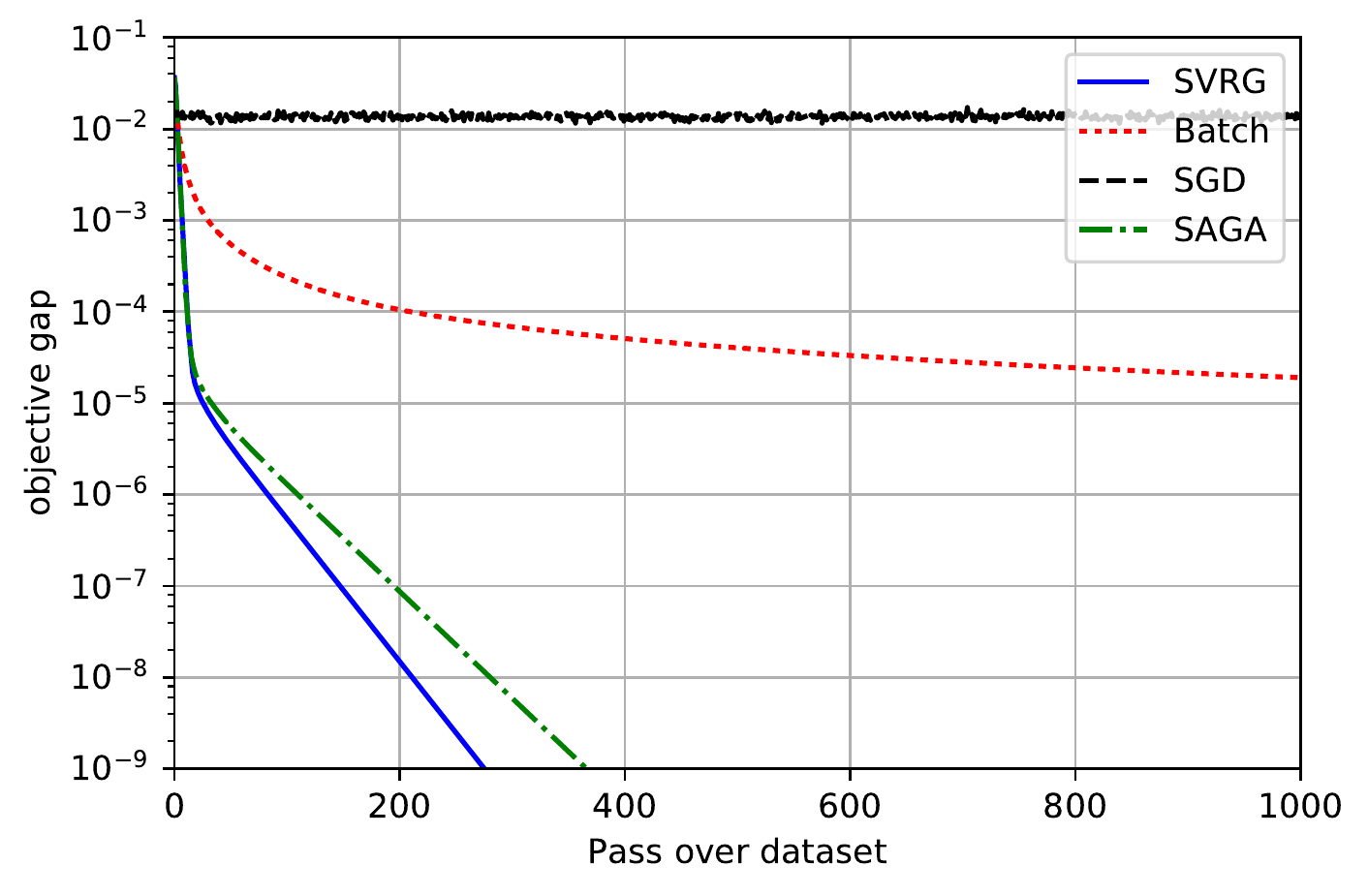}
	\caption{The x-axis is the number of pass over the dataset. y-axis is the objective gap with  log scale. From left to right (a) $n=10000,p=500, \lambda_{\max}(\Sigma)/\lambda_{\min}(\Sigma)=10$.In (b) $n=10000,p=500, \lambda_{\max}(\Sigma)/\lambda_{\min}(\Sigma)=1000$.  (c)  $n=10000,p=300, \lambda_{\max}(\Sigma)/\lambda_{\min}(\Sigma)=1000$. } \label{fig:binary}
\end{figure*}

The experiment results show that SAGA and SVRG have similar performance,  and are followed by batch gradient method  then SGD. When the condition number is small (The left panel in Figure \ref{fig:binary}),  SVRG, SAGA and the batch gradient method all work well, while SVRG and SAGA converge to optimality gap of $10^{-9}$ much faster than the batch gradient method. The middle panel reports the result when the condition number is significantly larger. We observe that even after 1000 passes, the batch gradient method still has a large objective gap ($10^{-4}$), while SVRG and SAGA still work well. In the right panel, we decrease the dimensionality of the problem. Consequently, all algorithms converge faster than the middle panel since there are few parameters. In all settings, SGD converges fast at beginning and then is stuck with a relatively high objective gap ($10^{-2}$) due to the variance of estimating the gradient. 
\begin{figure*}[h]
	\includegraphics[width=0.33\textwidth]{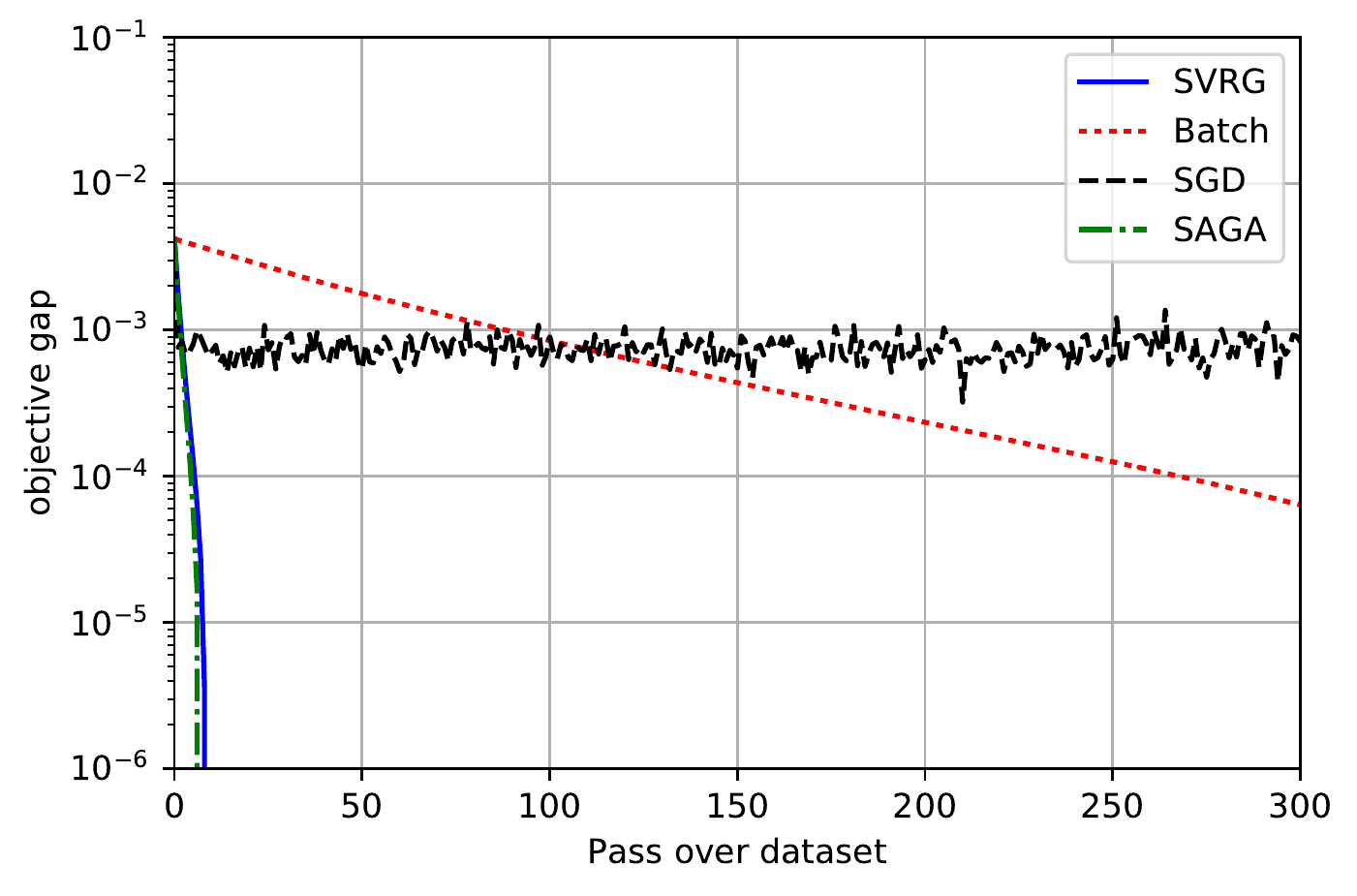}\includegraphics[width=0.33\textwidth]{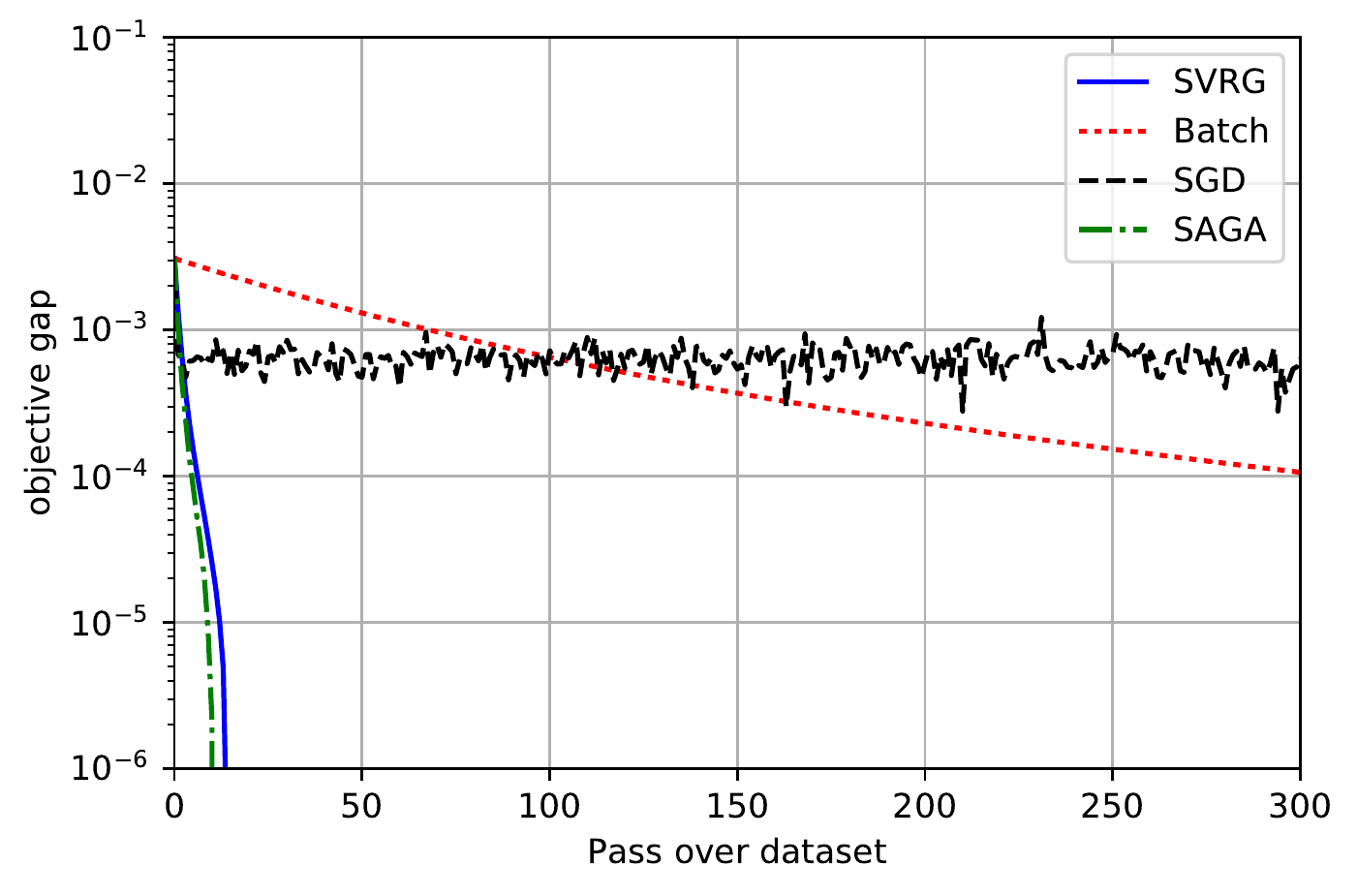}\includegraphics[width=0.33\textwidth]{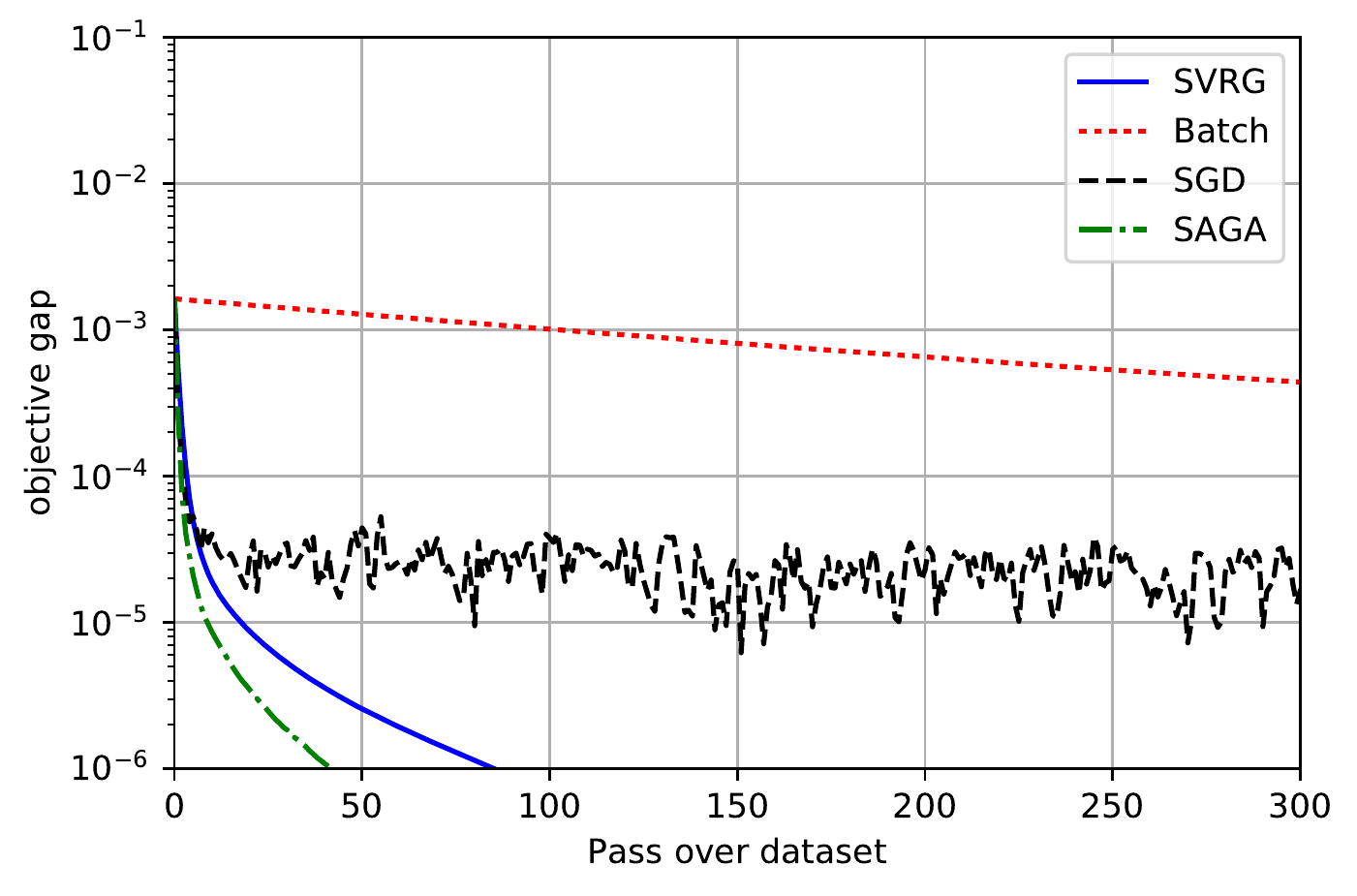}
	
	\caption{The x-axis is the number of pass over the dataset. y-axis is the objective gap with  log scale. From left to right (a) $n=5000,p=500, \lambda_{\max}(\Sigma)/\lambda_{\min}(\Sigma)=100, \delta=0.9, \sigma=10.$  (b)  $n=5000,p=500,\lambda_{\max}(\Sigma)/\lambda_{\min}(\Sigma)=100,\delta=0.8,\sigma=5.$ (c)  $n=5000,p=500,\lambda_{\max}(\Sigma)/\lambda_{\min}(\Sigma)=500,\delta=0.85,\sigma=5.$  } \label{fig:robust}
\end{figure*}

\textbf{Robust regression:} The data generation model  is $Y_i=\langle \theta^*, X_i\rangle+\epsilon_i$, where the feature vector is sampled from the norm distribution $\mathcal{N} (0,\Sigma)$. The noise $\epsilon_i$ is generated from the Gaussian mixture distribution $(1-\delta) \mathcal{N}(0,1)+\delta \mathcal{N}(0,\sigma^2)$.  Each entry of true true parameter $\theta^*$ is generated from the Bernoulli distribution with $p=0.5$. Again, we normalize $\theta^*$ such that $\|\theta^*\|_2=1$. We use Tukey's bisquare loss as our loss function and set $t_0=4.865$ and $r=10$ as that in \citet{mei2016landscape}. The optimal solution is obtained by running SVRG for a long time (1000 passes over dataset) in each experiment. We did three experiments with different settings of $\delta$, $\Sigma$ and $\sigma$, and  present the result in Figure \ref{fig:robust}.

%

In all settings, SAGA performs the best and is followed by SVRG. Both algorithms significantly outperforms the batch gradient method, which verifies our theorem. Even when the problem is  ill conditioned (e.g., $\lambda_{\max}(\Sigma)/\lambda_{\min}(\Sigma)=500$ in the right figure), SAGA and SVRG can converge fast, while batch gradient is very slow. SGD in all settings converges fast at the beginning stage and then is stuck, with the objective gap being $10^{-3}$, due to the variance of  stochastic gradient.

\subsection{Real dataset}
\subsubsection{Binary classification}
In this section, we test the binary classification problem in IJCNN1 (n=49990, p=22) \cite{Danil2001}, Covertype(n=495141, p=54) \cite{blackard1999comparative}, and Dota2 Games result dataset (n=92650,p=116) \cite{Tridgell:2016}. In the experiment, we choose $\zeta(\alpha)=\frac{1}{1+\exp(-\alpha)}$ and normalized all features  to $[-1,1]$. Since Covertype is a multi-class classification dataset, we extract class one and two as our data. We compare SVRG, SAGA, the  batch gradient method and SGD for all three datasets, and present the result in Figure \ref{fig: binary_real}. The optimal solutions for each dataset are obtained by running SVRG long enough time (e.g., 1000 passes of dataset). 

\begin{figure*}
	\includegraphics[width=0.33\textwidth]{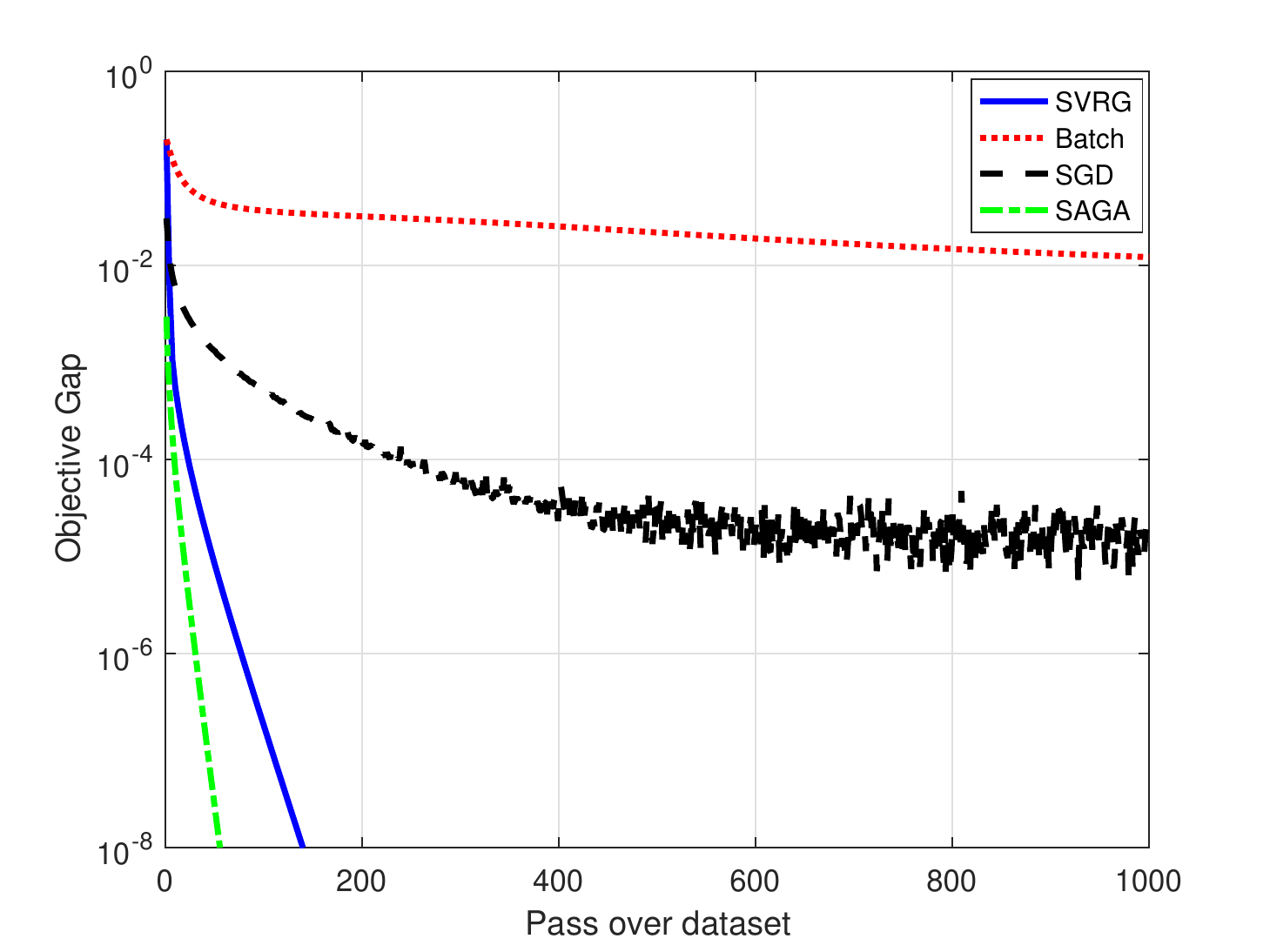}\includegraphics[width=0.33\textwidth]{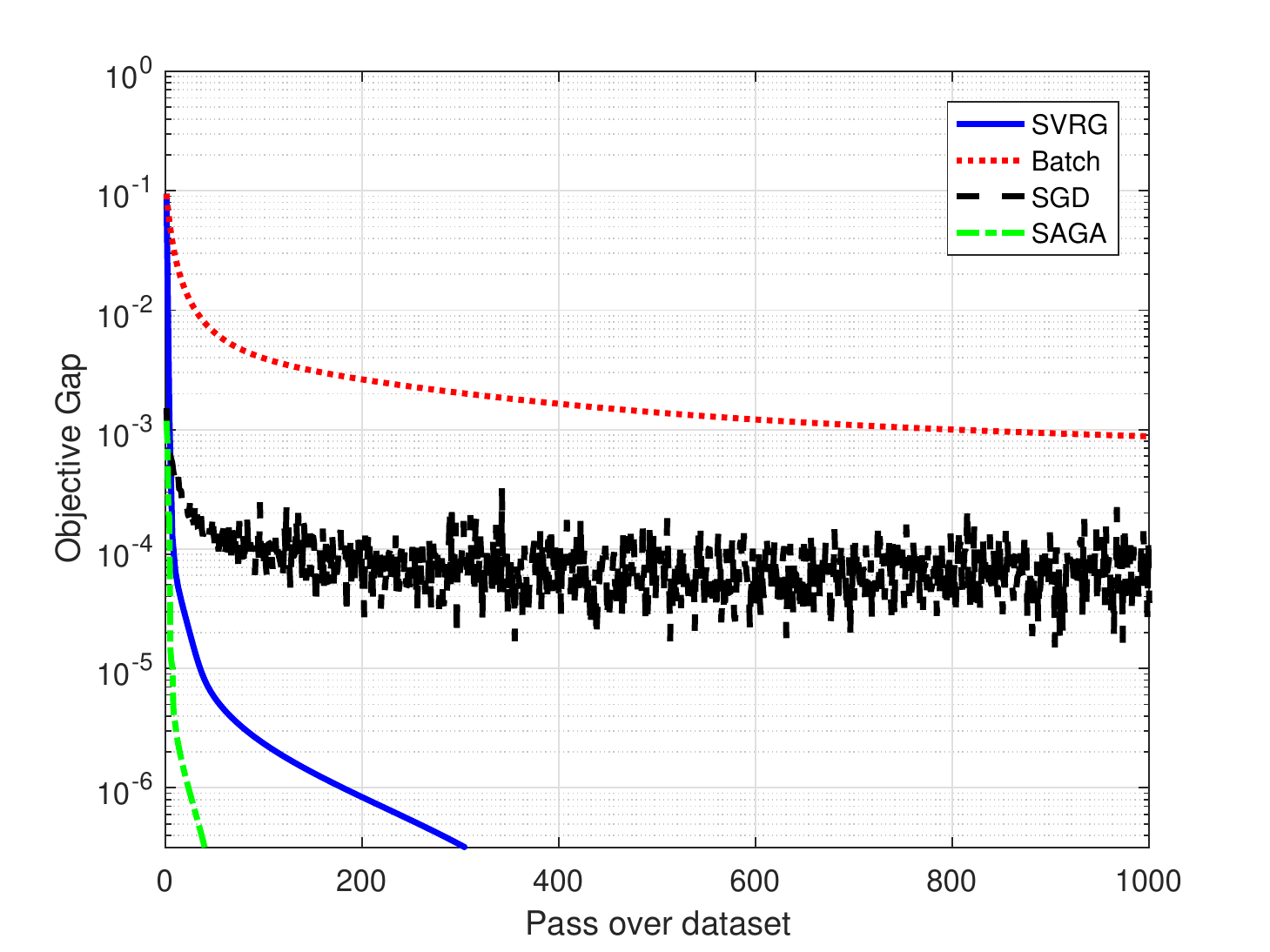}\includegraphics[width=0.33\textwidth]{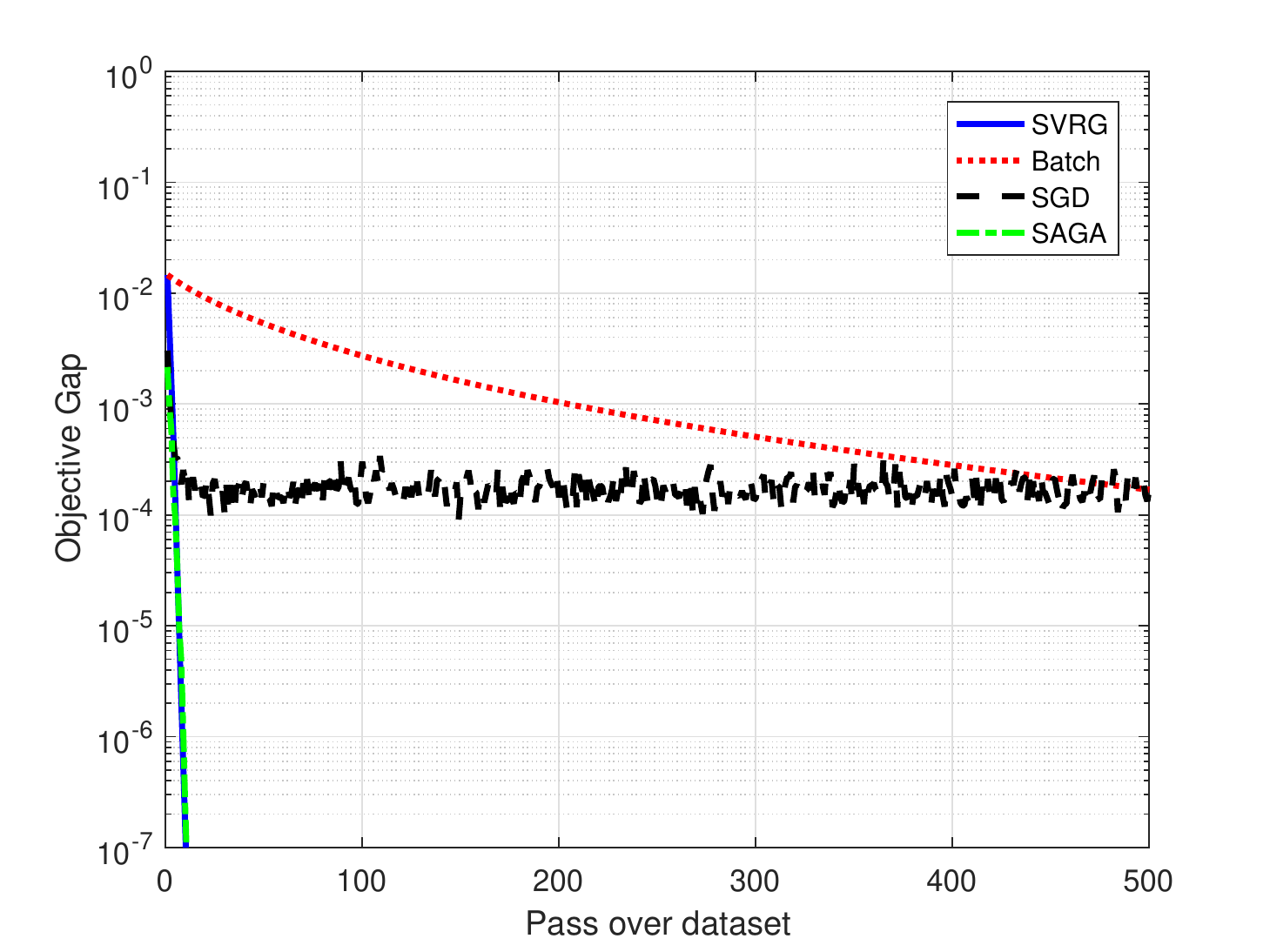}
	\caption{Binary classfication problem in IJCNN1, covertype and Dota2 Game dataset. The x-axis is the passes over dataset, the y-axis is the objective gap with log scale.}	\label{fig: binary_real}
\end{figure*}

SAGA converges fastest in all three experiments, followed  by SVRG. The batch gradient method   converges very slowly due to its bad dependence on the condition number. Indeed it is even worse than the SGD in all three datasets. SGD converges very fast at early stage and  saturates at a large objective gap.
\begin{figure*}
	\includegraphics[width=0.33\textwidth]{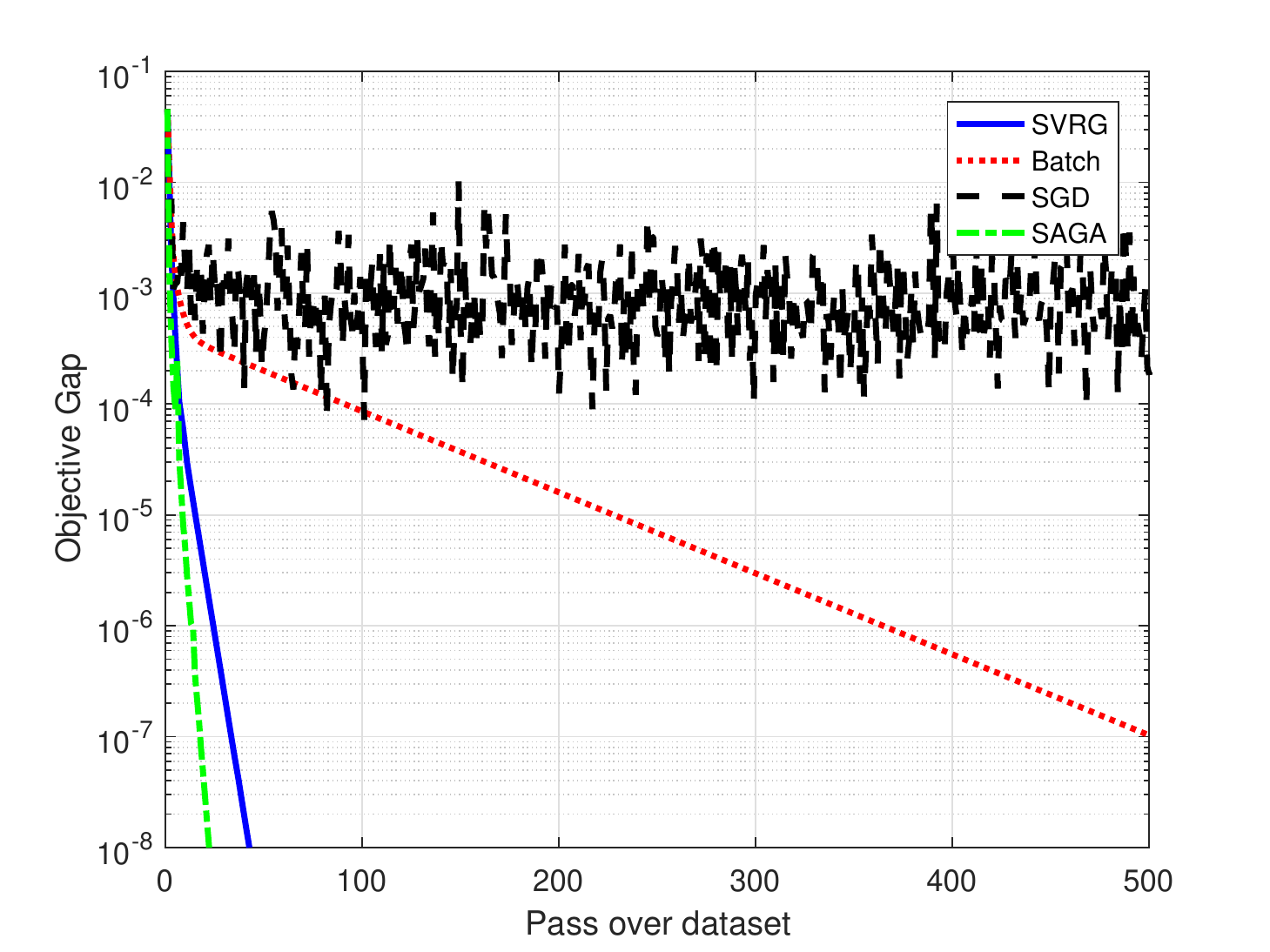}\includegraphics[width=0.33\textwidth]{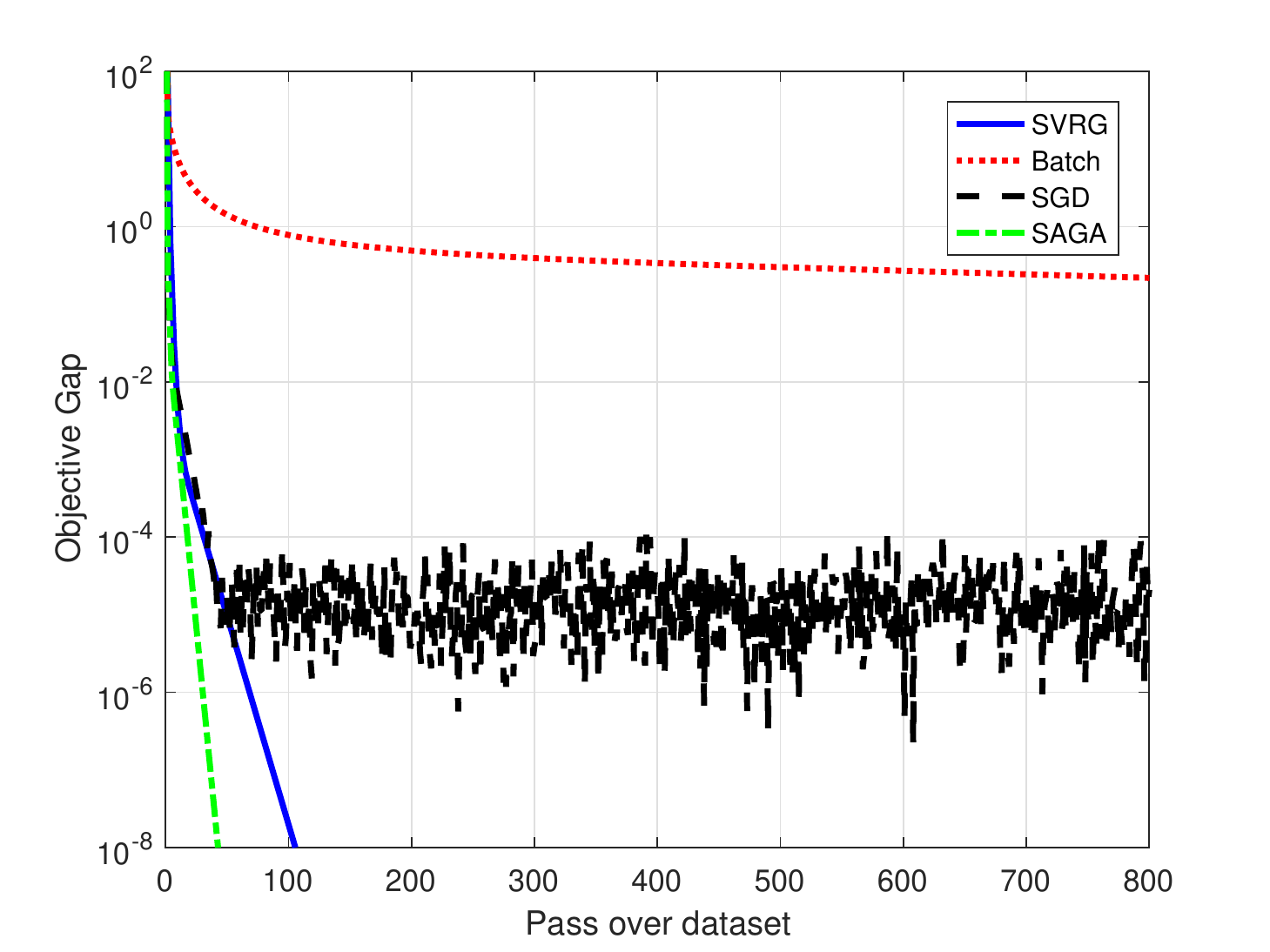}\includegraphics[width=0.33\textwidth]{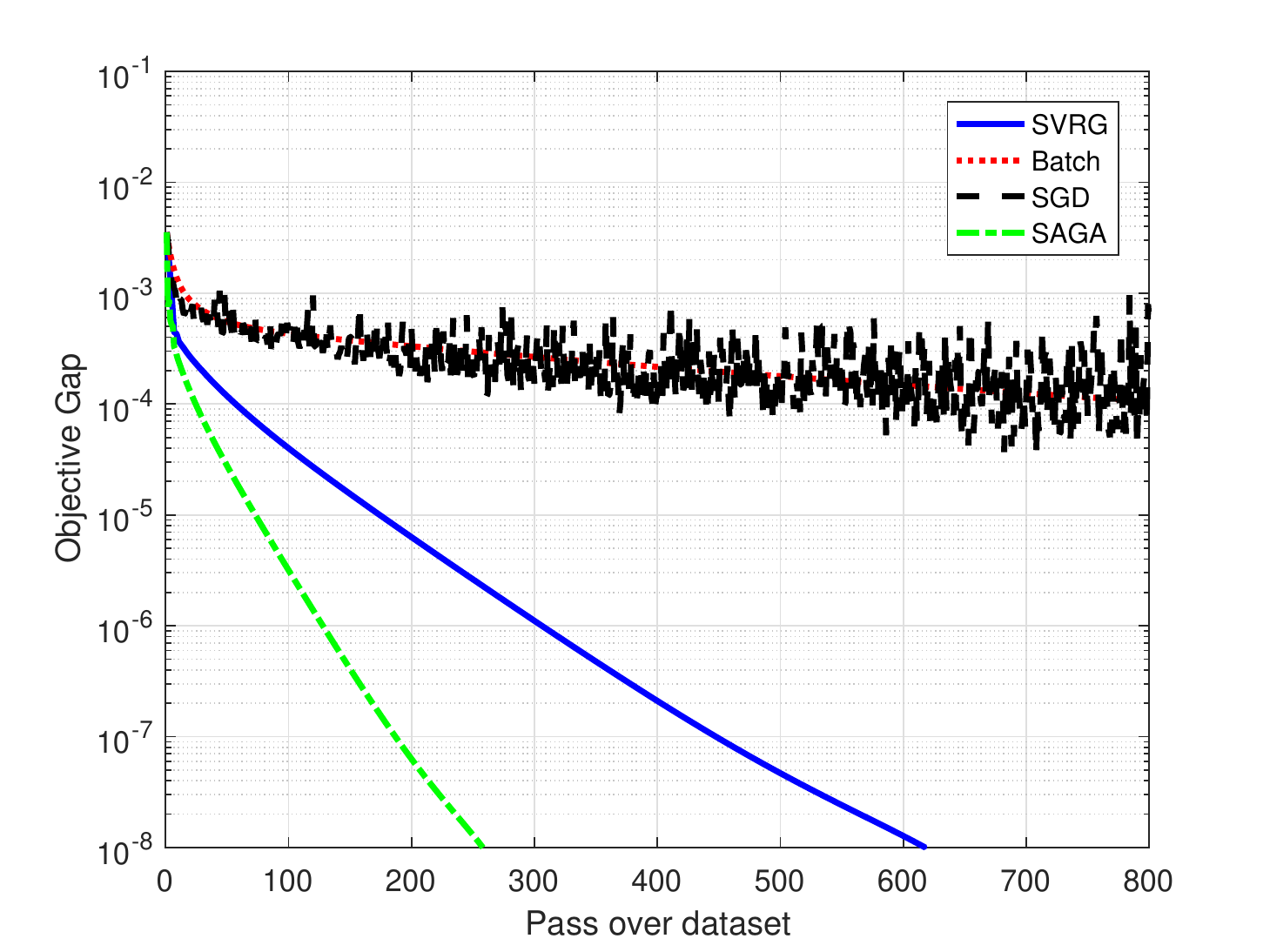}
	\caption{Robust regression problem in Airfoil Self-Noise, Communities and Crime, and Parkinsons Telemonitoring dataset. The x-axis is the pass over dataset, the y-axis is the objective gap with log scale.}	\label{fig: robust_real}
\end{figure*}

%

\subsubsection{Robust regression}

We test the robust regression problem for the following datasets: Airfoil Self-Noise (n=1503, p=6) \cite{Brooks2014}, Communities and Crime (n=1994,p=128) \cite{redmond2002data}, and Parkinsons Telemonitoring  (n=5875,p=26) \cite{tsanas2010accurate}. We corrupt the output by adding   noise from the Gaussian mixture distribution $\epsilon \sim (1-\delta) \mathcal{N}(0,1)+\delta \mathcal{N}(0,\sigma^2) $, similarly as in the synthetic dataset, and choose $\delta=0.1$ in all three experiment, $\sigma=2,5,5$ respectively. The result of  SVRG, SAGA, batched gradient method and SGD   on these three datasets are reported in Figure \ref{fig: robust_real}.

In all three experiments, SAGA converges the fastest,  followed by SVRG. SGD in all experiments converges quickly at the beginning stage and then is stuck at large objective gaps later due to variance of the stochastic gradient.  In the dataset Airfoil Self-Noise, batched gradient performs well. However at the other two datasets, its performance is either similar to SGD (Prikinsons Telemonitoring dataset), or even worse  (Communities and Crime). This is likely due to its heavy dependence to the condition number.

\section{Conclusion and future work}

In this paper, we solve two kinds of non-convex problem with stochastic variance reduction methods and prove the algorithms converge to the global optimum of the problem linearly. Our analysis exploits the fact that the population problem often has more favorable properties in terms of optimization.  Although the finite sample problem does not possess these favorable properties, it is possible to control the impact of departing from the population problem on the performance of optimization algorithms.

A future work is to consider the optimization in the high dimensional statistics setting, i.e., when $p\gg n$. In this case, Theorem \ref{Theorem:population} fails and a possible solution is to add the regularization to encourage the sparsity. 

\bibliography{landscape}
\bibliographystyle{plain}

\onecolumn
\appendix
\title{\bf Appendix }
\date{}
\author{}
\maketitle

\section{Binary classification}	

We start with a Lemma which presents some properties of $R(\theta)$. It is quite similar to Lemma 8 in \cite{mei2016landscape}, and we present here for completeness. 
\begin{lemma}\label{Lemma.bound_gradient_binary}
	Assume $\|\theta^*\|\leq r/3$, and Assumption \ref{Assumption:binary} is satisfied, 
	\begin{itemize}
		\item There exist an $\epsilon_0$ and $\kappa_0>0$ such that $\inf_{\theta\in B^p(\theta^*,\epsilon_0)} \lambda_{\min} \big(\nabla^2 R(\theta) \big)\geq \kappa_0 $.
		\item There exist some positive constant $\underline{L}_0$ such that $ \inf_{\theta \in B^p (0,r)\backslash B^p(\theta^*,\epsilon_0)} \|\nabla R(\theta)\|_2\geq \underline{L}_0$.
		\item  For all $\theta\in B^p (0,r)\backslash B^p (\theta^*, \epsilon_0/2)$, $\|\nabla R_n(\theta)\|_2>0$ 	
	\end{itemize}
	All constants $\epsilon_0, \mu_0, \underline{L}_0$ just depend on $\sigma(\cdot),r,\tau,L_\sigma$. 
\end{lemma}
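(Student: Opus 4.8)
The plan is to read off everything from the explicit form of the population loss and then to convert the sub-Gaussian tail hypotheses into \emph{dimension-free} constants by a truncation argument. First I would record that, since $Y\in\{0,1\}$ and $\mathbb{E}[Y\mid X=x]=\zeta(\langle\theta^*,x\rangle)$, conditioning on $X$ gives
$$R(\theta)=\mathbb{E}_X\big[(\zeta(\langle\theta,X\rangle)-\zeta(\langle\theta^*,X\rangle))^2\big]+\mathbb{E}_X\big[\mathrm{Var}(Y\mid X)\big],$$
so $R$ differs from a $\theta$-independent constant by a smooth functional. Using $|\zeta|\le 1$, $\max\{\|\zeta'\|_\infty,\|\zeta''\|_\infty,\|\zeta'''\|_\infty\}\le L_\zeta$, and the fact that a sub-Gaussian $X$ has all moments, dominated convergence justifies differentiating under the expectation and yields continuous $\nabla R,\nabla^2R$ with $\nabla R(\theta)=2\,\mathbb{E}_X[(\zeta(\langle\theta,X\rangle)-\zeta(\langle\theta^*,X\rangle))\zeta'(\langle\theta,X\rangle)X]$ and $\nabla^2R(\theta^*)=2\,\mathbb{E}_X[\zeta'(\langle\theta^*,X\rangle)^2XX^\top]$. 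Strict monotonicity of $\zeta$ together with $\mathbb{E}XX^\top\succ0$ also shows $\theta^*$ is the unique global minimizer of $R$ on $B^p(0,r)$.

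For the first item, fix a unit vector $w$ and a level $M$. On $\{|\langle\theta^*,X\rangle|\le M\}$ one has $\zeta'(\langle\theta^*,X\rangle)\ge c_M:=\inf_{|t|\le M}\zeta'(t)>0$ (positivity and continuity of $\zeta'$ on a compact interval), so
$$w^\top\nabla^2R(\theta^*)w\ \ge\ 2c_M^2\Big(\mathbb{E}\langle X,w\rangle^2-\mathbb{E}\big[\langle X,w\rangle^2\mathbf{1}\{|\langle\theta^*,X\rangle|>M\}\big]\Big).$$
I would bound the first term below by $\underline{\gamma}\tau^2$ using $\mathbb{E}XX^\top\succeq\underline{\gamma}\tau^2 I$, and the second term above by $(\mathbb{E}\langle X,w\rangle^4)^{1/2}\,\mathbb{P}(|\langle\theta^*,X\rangle|>M)^{1/2}\le C\tau^2\exp(-cM^2/(\tau^2 r^2))$ via Cauchy--Schwarz, the one-dimensional sub-Gaussian fourth moment, and the sub-Gaussian tail with $\|\theta^*\|\le r/3$. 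Choosing $M$ large — depending only on $\tau,\underline{\gamma},r$ — makes the bracket at least $\tfrac12\underline{\gamma}\tau^2$, so $\nabla^2R(\theta^*)\succeq 2\kappa_0 I$ with $\kappa_0:=\tfrac12 c_M^2\underline{\gamma}\tau^2$. Then a dimension-free bound $\|\nabla^2R(\theta)-\nabla^2R(\theta^*)\|_{op}\le\Lambda\|\theta-\theta^*\|$, obtained by differentiating once more and contracting each extra factor of $X$ against a fixed direction ($w$ or $(\theta-\theta^*)/\|\theta-\theta^*\|$) before taking moments of one-dimensional sub-Gaussians — so $\Lambda$ depends only on $L_\zeta,\tau$ — gives the claim with $\epsilon_0:=\kappa_0/\Lambda$.

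For the second item I use the monotone-operator property of $R$: writing $s:=\langle\theta-\theta^*,X\rangle$ and using that $\zeta$ is strictly increasing with $\zeta'>0$, the integrand of $\langle\nabla R(\theta),\theta-\theta^*\rangle=2\,\mathbb{E}[(\zeta(\langle\theta,X\rangle)-\zeta(\langle\theta^*,X\rangle))\zeta'(\langle\theta,X\rangle)\,s]$ is nonnegative everywhere and is at least $c_M^2 s^2$ on $\{|\langle\theta,X\rangle|\le M,\ |\langle\theta^*,X\rangle|\le M\}$. Repeating the truncation estimate ($\mathbb{E}s^2\ge\underline{\gamma}\tau^2\|\theta-\theta^*\|^2$, and the truncated remainder $\le\tfrac12\underline{\gamma}\tau^2\|\theta-\theta^*\|^2$ for $M$ large, using $\|\theta\|\le r$) gives $\langle\nabla R(\theta),\theta-\theta^*\rangle\ge\underline{\gamma}\tau^2 c_M^2\|\theta-\theta^*\|^2$, hence by Cauchy--Schwarz $\|\nabla R(\theta)\|_2\ge\underline{\gamma}\tau^2 c_M^2\|\theta-\theta^*\|_2$; on $B^p(0,r)\setminus B^p(\theta^*,\epsilon_0)$ this is $\ge\underline{L}_0:=\underline{\gamma}\tau^2 c_M^2\epsilon_0$. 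The third item is a perturbation corollary: the same bound gives $\|\nabla R(\theta)\|_2\ge\underline{\gamma}\tau^2 c_M^2\,\epsilon_0/2$ on $B^p(0,r)\setminus B^p(\theta^*,\epsilon_0/2)$, while Theorem~\ref{Theorem:population} gives $\sup_{\theta}\|\nabla R_n(\theta)-\nabla R(\theta)\|_2\le\tau\sqrt{Cp\log n/n}$ with probability $\ge 1-\exp(-C_1 n)$; once $n\gtrsim p\log n$ with a sufficiently large constant the right-hand side is below $\underline{\gamma}\tau^2 c_M^2\epsilon_0/2$, so $\|\nabla R_n(\theta)\|_2>0$ uniformly on that set. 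Finally $\kappa_0,\epsilon_0,\underline{L}_0$ (and $\mu_0$, which one sets to $\kappa_0$) are built only from $c_M,L_\zeta,\underline{\gamma},\tau,r$, i.e. only from $\zeta(\cdot),r,\tau,L_\zeta$.

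The step I expect to be the main obstacle is keeping every constant independent of the dimension $p$: a naive bound on the Hessian perturbation or on the truncation remainders introduces $\mathbb{E}\|X\|^k\sim p^{k/2}$, and the fix throughout is to project $X$ onto the one or two relevant fixed directions before taking any moments, so that only scalar sub-Gaussian tail and moment bounds enter. Everything else — differentiation under the integral, the sign argument for monotonicity, and the union bound with Theorem~\ref{Theorem:population} — is routine.
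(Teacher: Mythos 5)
Your proposal is correct and follows essentially the same route as the paper's proof: lower-bound $\nabla^2R(\theta^*)$ and the directional gradient $\langle\nabla R(\theta),\theta-\theta^*\rangle$ by truncating to the event where the relevant one-dimensional projections of $X$ are bounded (so $\zeta'\ge c_M>0$ there), control the truncated remainder by Cauchy--Schwarz plus scalar sub-Gaussian moment and tail bounds, use a dimension-free Lipschitz bound on the Hessian to pick $\epsilon_0$, and obtain the third item by perturbing the population gradient bound with Theorem~\ref{Theorem:population}. The only cosmetic difference is that you truncate via a union bound over the events $\{|\langle\theta,X\rangle|>M\}$ and $\{|\langle\theta^*,X\rangle|>M\}$, whereas the paper uses a single event defined through the two-dimensional projection onto $\mathrm{span}\{\theta,\theta^*\}$; both yield the same dimension-free constants.
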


\begin{proof}
	
	It is easy to verify that $$\nabla^2 R(\theta)=\mathbb{E} \{\beta(\theta) XX^T\}, $$
	where $\beta(\theta)=2\big( \zeta'(\theta^T X)^2 + (\zeta (\theta^TX)-\zeta(\theta^{*T}X)) \zeta''(\theta^TX) \big)$, and we use the fact that $\mathbb{E} Y=\zeta (\langle 
	\theta^*,X\rangle).$
	
	At the ground truth $\theta^*$, we have	
	$$\nabla^2 R(\theta^*)=\mathbb{E} \big(2\zeta'(\theta^{*T}X)^2XX^T \big) .$$
	
	Recall in Assumption \ref{Assumption:binary}, we have $\zeta'(\cdot)>0$, thus for any $s>0$ there exist $L(s)$ such that
	\begin{equation}\label{equ.L_s}
	\zeta'(t)\geq L(s), \forall t\in [-s,s].
	\end{equation}
	We define a event $B=\{|\langle \theta^*,X \rangle|\leq \frac{2s}{3} \}$, then for any $\|u\|_2=1$, we have
	
	\begin{equation}
	\begin{split}
	u^T \nabla^2 R(\theta^*) u &\geq \mathbb{E} \{2\zeta'(\theta^{*T} X)^2 \langle u,X \rangle^2 \mathbf{1}_{B}  \}\\
	& \geq 2L^2(s) \mathbb{E} \{ \langle u,X\rangle^2 \mathbf{1}_{B} \}\\
	&\overset{(a)}{\geq} 2L^2(s) \big( \mathbb{E} [\langle u,X \rangle^2 ]-\mathbb{E} [\langle u,X \rangle^2] \mathbf{1}_{B^c} \big)\\
	&\overset{(b)}{\geq} 2L^2(s) \big( \underline{\gamma}\tau^2-(\mathbb{E} \langle u,X \rangle^4  \mathbb{P}(B^c) )^{1/2}   \big)\\
	&\overset{(c)}{\geq} 2L^2(s) \big( \underline{\gamma}\tau^2-\tau^2\sqrt{ C_4 \mathbb{P}(B^c) } \big)\\
	& \overset{(d)}{\geq} 2L^2(s) \tau^2 (\underline{\gamma}-\sqrt{C_4}\exp(-\frac{s^2}{r^2\tau^2})),
	\end{split}
	\end{equation}
	where (a) uses equation \eqref{equ.L_s} , (b) holds from the Cauchy-Schwartz inequality and  (c) holds from the property of sub-Gaussian variable as follows \cite{vershynin2010introduction}:  Suppose $X$ is zero mean and $\tau^2$ sub-Gaussian random variable, there exists numerical contants $C_{2k}\in (0,\infty)$ for all integers $k\geq 1$ such that 
	
	\begin{equation}\label{equ.subgaussian1}
	\mathbb{E} |\langle u,X\rangle|^{2k} \leq C_{2k}\|u\|_2^{2k} \tau ^{2k}.
	\end{equation}

	(d)  uses the concentration of sub-Gaussian variable, we refer reader to \cite{vershynin2010introduction}.

	Thus we can choose $s=Cr\tau$ with some positive constant $C$ such that $ \sqrt{C_4}\exp(-C)\leq \frac{1}{2}\underline{\gamma}  $ and then have 
	$$ u^T\nabla^2 R(\theta^*) u\geq L^2(s) \tau^2\underline{\gamma} .$$
	
	Now we bound $\nabla^2 R(\theta)$ by bound the difference of $\nabla^2 R(\theta)-\nabla^2 R(\theta^*).$ We denote the Lipschitz parameter of $\beta(\theta)$ as $L_\beta$ (notice it just depends on $L_\sigma$).
	For $\|u\|_2=1$, we have 
	\begin{equation}
	\begin{split}
	|u^T (\nabla^2 R(\theta)-\nabla^2 R(\theta^*))u|=&\mathbb{E} \{ u^T [(\beta(\theta)-\beta(\theta^*))XX^T]u \}\\
	&\leq L_\beta \mathbb{E} \{\langle \theta-\theta^*,X \rangle\cdot \langle u,X \rangle^2   \}\\
	&\leq L_\beta [\mathbb{E}\langle\theta-\theta^*,X \rangle^2\cdot \mathbb{E}\langle u,X \rangle^4]^{1/2}\\
	&\leq L_\beta \big(  C_4\|\theta-\theta^*\|_2^2\tau^6 \big)^{1/2}\\
	&=L_\beta \sqrt{C_4} \|\theta-\theta^*\|_2\tau^3,
	\end{split}
	\end{equation}
	where the first inequality uses the fact that $\beta$ is $L_\beta$ Lipschitz, second ones holds from the Cauchy-Schwartz inequality, the third one uses equation \eqref{equ.subgaussian1} again.
	
	Now we set $\epsilon_0=\frac{L^2(s) \underline{\gamma}}{2\sqrt{C_4}L_\beta \tau}$.
	
	It is clear that when $\|\theta-\theta^*\|\leq \epsilon_0$, 
	
	\begin{equation}\label{equ.smallest_eig}
	\lambda_{\min} (\nabla^2 R(\theta^*) )\geq \kappa_0=\frac{1}{2} L^2(s)\tau \underline{\gamma},
	\end{equation}
	where $s=C\gamma\tau$.

	Then we lower bound the magnitude of the gradient $\nabla R(\theta)$ when $\theta \in B^p (0,r)\backslash B^p(\theta^*,\epsilon_0)$. 
	
	Recall we have $R(\theta)=\mathbb{E} (Y-\zeta (\langle \theta, X \rangle))^2$. It is easy to verify that $R(\theta)$ is minimized at  the truth parameter $\theta^*$.
	Notice 
	\begin{equation}
	\begin{split}
	\langle \theta-\theta^*, \nabla R(\theta) \rangle&=\langle \theta-\theta_0, X \rangle \cdot 2\mathbb{E} \{(\zeta (\theta^TX)-Y ) \zeta' (\theta^T X)\}\\
	&=2\mathbb{E} \{ (\zeta (\theta^TX)-\zeta(\theta^{*T} X)) \zeta' (\theta^TX)\cdot \langle \theta-\theta^*,X \rangle  \}
	\end{split}
	\end{equation} 
	where we use the fact that $\mathbb{E} Y=\zeta (\langle 
	\theta^*,X\rangle)$.

	We define some event $A$ such that  $ | \theta^TX|\leq s, |\theta^* X|\leq s, |(\theta-\theta_0)^TX)|\leq s $ when this event happens. Particularly, let $U\in \mathbb{R}^{2\times d}$ be an orthogonal transform from $\mathbb{R}^p$ to $\mathbb{R}^2$, whose row space contain $\theta$ and $\theta^*$ and $A=\{ \|UX\|_2\leq \frac{2s}{3r}  \}$. Since $\theta\in B^p (0,r), \|\theta^*\|\leq \frac{r}{3}$, thus we have $ | \theta^TX|\leq s, |\theta^* X|\leq s, |(\theta-\theta^*)^TX)|\leq s$ when $A$ happens.
	Then we have
	\begin{equation}
	\begin{split}
	\langle \theta-\theta^*,\nabla R(\theta) \rangle &\geq 2L^2(s) \{ \mathbb{E} \langle \theta-\theta^* ,X\rangle^2 -\mathbb{E} \langle \theta-\theta^* ,X\rangle^2 \textbf{1}_{A^c}  \}\\
	& \geq 2L^2(s) \{\underline{\gamma} \tau^2 \|\theta-\theta^*\|^2- \big( \mathbb{E} [ \langle \theta-\theta^*,X \rangle^4] \mathbb{P}(A^c)     \big)^{1/2}  \}\\
	& \geq 2L^2(s) \|\theta-\theta^*\|^2 \tau^2 \big( \underline{\gamma}-\sqrt{C_4 \mathbb{P} (A^c)} \big) 
	\end{split}
	\end{equation}
	where the first inequality holds from the equation \eqref{equ.L_s} and the intermediate value theorem on $\zeta (\theta^TX)-\zeta (\theta^{*T} X) $, the second inequality uses the assumption $\mathbb{E} XX^T \succeq \underline{\gamma} \tau^2$ in Assumption \ref{Assumption:binary} and Cauchy-Schwartz inequality. The third inequality holds from equation \eqref{equ.subgaussian1}. 
	
	Now we provide a bound on $\mathbb{P} (A^c)$.
	
	Using the the fact that $X$ is a $\tau^2$ sub-Gaussian, we have
	
	$$\mathbb{P} (A^c)=\mathbb{P} (\|UX\|_2\geq \frac{2s}{3r})\leq P( |U_1,X|\geq \frac{\sqrt{2}s}{3r})+P( |U_2,X|\geq \frac{\sqrt{2}s}{3r})\leq 4\exp (\frac{-s^2}{9r^2\tau^2}).$$
	Thus we have
	$$ \langle \theta-\theta^*, \nabla R(\theta)\rangle \geq 2L^2 (s) \|\theta-\theta^*\|^2_2 \tau^2 \big( \underline{\gamma}-2\sqrt{C_4}\exp(\frac{-s^2}{18r^2\tau^2})  \big) .$$
	
	We set $s=3\sqrt{2}C r\tau $ such that $ 2\sqrt{C_4}\exp (-\frac{-s^2}{18r^2\tau^2})\leq \frac{1}{2} \underline{\gamma}. $ 
	
	We then denote $\mu_0=\underline{\gamma} L^2(s) \tau^2$, then have
	\begin{equation}\label{equ.smallest_direction}
	\langle \theta-\theta^*, \nabla R(\theta) \rangle\geq \mu_0 \|\theta-\theta^*\|_2^2,
	\end{equation}	
	which implies $ \|\nabla R(\theta)\|_2 \geq 
	\mu_0 \|\theta-\theta^*\|_2$.
	
	Then we choose $ \underline{L_0}=\epsilon_0 \mu_0$, and have  $ \inf_{\theta \in B^p (0,r)\backslash B^p(\theta^*,\epsilon_0)} \|\nabla R(\theta)\|_2\geq \underline{L}_0$.

	For $\theta\in B^p (0,r)\backslash B^p (\theta^*, \epsilon_0/2),$ we have
	
	\begin{equation}
	\begin{split}
	&\inf_{\theta\in B^p (0,r)\backslash B^p (\theta^*, \epsilon_0/2)} \frac{\langle \nabla R_n(\theta),\theta-\theta^* \rangle}{\|\theta-\theta^*\|}\\
	&\geq \inf_{\inf_{\theta\in B^p (0,r)\backslash B^p (\theta^*, \epsilon_0/2)} } \frac{\langle \nabla R(\theta),\theta-\theta^* \rangle}{\|\theta-\theta^*\|}-\sup_{\theta\in B^p (0,r)\backslash B^p (\theta^*, \epsilon_0/2) }\| \nabla R_n(\theta)-\nabla R(\theta)\|_2\\
	&\geq \inf_{\theta\in B^p (0,r)\backslash B^p (\theta^*, \epsilon_0/2)}  \mu_0 \|\theta-\theta^*\|_2-\frac{1}{4} \mu_0 \epsilon_0\geq \frac{1}{4}\mu_0 \epsilon_0
	\end{split}
	\end{equation}
	Then we have
	$$ \|\theta-\theta^*\|_2\|\nabla R_n(\theta)\|_2\geq \frac{\mu_0}{4} \|\theta-\theta^*\|_2,$$
	Thus $\|\nabla R(\theta)\|_2>0.$
\end{proof}

Now we are ready to prove Theorem \ref{Theorem:Binary}. Basically, it has two steps. In the first step, we prove that $\theta^k$ converges to $\theta^*$ when $\theta \in B^p (0,r)\backslash B^p(\theta^*,\epsilon_0)$. Remark that $\theta^*$ is the true parameter rather than the optimal solution $\hat{\theta}$ of the empirical loss function $R_n(\theta)$. Then we use the third result in Lemma \ref{Lemma.bound_gradient_binary} which implies $\|\hat{\theta}-\theta^*\|_2\leq \frac{\epsilon_0}{2}$ and convert the convergence result to $\hat{\theta}$. The second step is to prove the convergence to $\hat{\theta}$ in $B^p(\theta^*,\epsilon_0)$, it is relatively easy, since we will show $R_n(\theta)$ is strongly convex.

\begin{proof}[Proof of Theorem \ref{Theorem:Binary}]
	
	We start with the batch case.
	
	Using Lemma \ref{Lemma.bound_gradient_binary}, we already know there exists a $\epsilon_0>0$, and $ \underline{L}_0>0$ and $\mu_0>0$ for all $\theta\in B^{d}(0,r)$, such that
	$$ \langle \theta-\theta^*, \nabla R(\theta) \rangle \geq \mu_0 \|\theta-\theta^*\|_2^2$$
	and
	$$ \inf_{\theta\in B^d (0,r)\backslash B^d(\theta^*,\epsilon_0)} \|\nabla R(\theta)\|_2\geq \underline{L}_0.$$
	
	Notice all constant $\epsilon_0$, $\mu_0$, $\underline{L}_0$ are functions of $ (\sigma(\cdot),r,\tau^2, L_\sigma, \underline{\gamma})$ but do not depend on dimension $p$ and the distribution of $X$. We have
	$$ \mu_0 \|\theta-\theta^*\|_2^2\leq \langle \theta-\theta^*, \nabla R(\theta) \rangle\leq \|\theta-\theta^*\|_2\|\nabla R(\theta)\|_2. $$
	Thus
	\begin{equation}\label{equ.EB}
	\mu_0\|\theta-\theta^*\|_2\leq \|\nabla R(\theta)\|_2.
	\end{equation}

	Using the smoothness of $R(\theta)$ we have
	\begin{equation}\label{equ.pl}
	\begin{split}
	R(\theta)-R(\theta^*)\leq & \langle \nabla R(\theta^*), \theta-\theta^* \rangle+\frac{L}{2} \|\theta-\theta^*\|_2^2\\
	\leq &\frac{L}{2} \|\theta-\theta^*\|_2^2\leq \frac{L}{2\mu_0^2} \|\nabla R(\theta)\|_2^2,
	\end{split}
	\end{equation}
	where we use the fact that $\nabla R(\theta^*)=0$ and equation \eqref{equ.EB}.
	
	Define $\epsilon=\sup_{\theta\in B^p (0,r)} \|\nabla R(\theta)-\nabla R_n(\theta)\|_2 $ and use the smoothness of $R(\theta)$, and choose $\eta=\frac{1}{2L}$, we have
	\begin{equation}
	\begin{split}
	R(\theta^{k+1})\leq & R(\theta^k)+\langle \nabla R(\theta^k),\theta^{k+1}-\theta^k \rangle+\frac{L}{2} \|\theta^{k+1}-\theta^k\|_2^2\\
	=& R(\theta^k)-\langle \nabla R(\theta^k), \eta \nabla R_n(\theta^k) \rangle+\frac{L\eta^2}{2}\|\nabla R_n(\theta^k)\|_2^2\\
	=& R(\theta^k)-\frac{1}{2L} \|\nabla R(\theta^k)\|_2^2-\frac{1}{2L}\langle \nabla R(\theta^k), \nabla R_n (\theta^k)-\nabla R(\theta^k)  \rangle+\frac{1}{8L} \|\nabla R_n(\theta^k)\|_2^2\\
	\leq & R(\theta^k)-\frac{1}{2L} \|\nabla R(\theta^k)\|_2^2+\frac{1}{2L} \|\nabla R(\theta^k)\|_2 \|\nabla R_n (\theta^k)-\nabla R(\theta^k)\|_2\\
	&+\frac{1}{4L} (\|\nabla R(\theta^k)\|_2^2+\|\nabla R_n(\theta^k)-\nabla R(\theta^k)\|_2^2)\\
	\leq& R(\theta^k)-\frac{1}{4L} \|\nabla R(\theta^k)\|_2^2+\frac{1}{2L}\epsilon\|\nabla R(\theta^k)\|_2+\frac{1}{4L}\epsilon^2.
	\end{split}
	\end{equation}

	Recall the Theorem \ref{Theorem:population}, which says that $\epsilon\leq \sqrt{\frac{p\log n}{n}}$ with high probability.
	
	Now we choose $\epsilon\leq \frac{1}{4} \underline{L}_0=\frac{1}{4} \epsilon_0 \mu_0$, which can be satisfied by setting $\frac{n}{\log n} \geq \frac{Cp}{\underline{L}_0^2}$  from some positive constant C. Here $\epsilon_0$ and $\mu_0$ are constant appearing in Lemma \ref{Lemma.bound_gradient_binary}, and they do not depend on $p$ and $n$.
	
	Notice we have $\inf_{\theta\in B^d (0,r)\backslash B^d(\theta^*,\epsilon_0)} \|\nabla R(\theta)\|_2\geq \underline{L}_0 $ by Lemma \ref{Lemma.bound_gradient_binary}, thus we have
	
	\begin{equation}\label{equ.important_step}
	R(\theta^{k+1})\leq R(\theta^k)-\frac{1}{16L}\|\nabla R(\theta^k)\|_2^2.
	\end{equation}

	Now combine above equation with the \eqref{equ.pl}, i.e., $R(\theta^k)-R(\theta^*)\leq \frac{L}{\mu_0^2} \|\nabla R(\theta^k)\|^2$, we obtain
	
	$$ R(\theta^{k+1})-R(\theta^*)\leq (1-\frac{\mu_0^2}{16L^2}) (R(\theta^k)-R(\theta^*)) .$$
	
	Thus we obtain a linear convergence rate on optimality gap

	$$R(\theta^k)-R(\theta^*)\leq (1-\frac{\mu_0^2}{16L^2})^k (R (\theta^0)-R(\theta^*))$$
	
	until it comes into the region $ B^d(\theta^*,\epsilon_0).$

	Notice it easy to transfer the convergence result on $R(\theta^k)$ to $\theta^k$. Using \eqref{equ.important_step}, we have
	
	$$ \|\nabla R(\theta^k)\|_2^2\leq 16L \left(R(\theta^{k})-R(\theta^{k+1})\right)\leq 16 L \left(R(\theta^{k})-R(\theta^*)\right).$$
	
	Combining this with \eqref{equ.EB}, we obtain
	
	$$\|\theta^k-\theta^*\|_2^2\leq \frac{16L}{\mu_0^2}(1-\frac{\mu_0^2}{16L^2})^k \left(R(\theta^0)-R(\theta^*)\right) $$
	
	Suppose at step $K$ it comes into the region  $ B^d(\theta^*,\epsilon_0)$, since $R(\theta)$ is strongly convex with parameter $\kappa_0$, we have
	
	$$R(\theta^K)-R(\theta^*)\geq\kappa_0 \|\theta_K-\theta^*\|_2^2,$$
	which implies following equation by Lemma \ref{Lemma.bound_gradient_binary} $$ \|\theta_K-\theta^*\|_2^2\leq \frac{1}{\kappa_0} (1-\frac{\mu_0^2}{16L^2})^K \left(R(\theta^0)-R(\theta^*)\right). $$
	
	Then we convert above result to the convergence to $\hat{\theta}$.

	\begin{equation}
	\begin{split}
	\|\theta^k-\hat{\theta}\|^2_2 &\leq 2\|\theta^k-\theta^*\|^2_2+2\|\theta^*-\hat{\theta}\|^2_2\leq 2\|\theta^k-\theta^*\|^2_2+2 (\frac{\epsilon_0}{2})^2\\
	&\leq  3 \|\theta^k-\theta^*\|^2_2\leq \frac{48L}{\mu_0^2}(1-\frac{\mu_0^2}{16L^2})^k \left(R(\theta^0)-R(\theta^*)\right),
	\end{split}
	\end{equation}
	where the second inequality holds from the third result in Lemma \ref{Lemma.bound_gradient_binary}, i.e., $\hat{\theta}$ only locate in $ B^p (\theta^*, \epsilon_0/2)$.

	So far, we obtain the linear convergence rate to $\hat{\theta}$ with ratio $1-\frac{\mu_0^2}{16L^2}$ in the region $ B^d (0,r)\backslash B^d(\theta^*,\epsilon_0)$.

	Then we prove the convergence in the region $B^d(\theta^*,\epsilon_0)$.
	We apply Theorem \ref{Theorem:population} on the Hessian of the empirical loss. We denote $\epsilon_h=\|\nabla^2 R(\theta)-\nabla^2 R_n(\theta)\|_{op}$.
	
	\begin{equation}
	\begin{split}
	\lambda_{\min} (\nabla^2 R_n(\theta))&\geq \lambda_{\min} (\nabla^2 R(\theta))-\|\nabla^2 R(\theta)-\nabla^2 R_n(\theta)\|_{op}\\ 
	& \geq \kappa_0-\epsilon_h\geq \frac{1}{2}\kappa_0,
	\end{split}
	\end{equation}
	
	where the last inequality holds by choosing $\frac{n}{\log n} \geq \frac{4C_0p}{\kappa_0^2}.$
	
	Thus in the region $B^d(\theta^*,\epsilon_0)$, $\hat{R}_n(\theta)$ is strongly convex with parameter $\frac{1}{2}\kappa_0.$
	
	According to the standard result smooth and strongly convex function using gradient method algorithm \cite{nesterov2013introductory}, we have the rate 
	
	$$ \|\theta^{K+t+1}-\hat{\theta}\|_2^2\leq (1-\frac{\kappa_0}{L}) \|\theta^{K+t}-\hat{\theta}\|_2^2. $$
	
	Notice $\kappa_0=\frac{1}{2}L^2(C\gamma \tau) \tau\underline{\gamma}$ and $\mu_0=L^2(C\gamma \tau) \tau\underline{\gamma} 
	$ (recall the the result of $\kappa_0 $ and $\mu_0$ in equation \eqref{equ.smallest_eig} and \eqref{equ.smallest_direction}), generally speaking, in the late stage the algorithm converge faster ( $1-\frac{\kappa_0}{L}$ v.s. $1-\frac{\mu_0^2}{L^2}$) .

	Now we begin to prove convergence of SVRG in the Theorem \ref{Theorem:Binary}.
	
	Similar to the batch case, we divide our proof into two step. The first step is in the region $ \theta\in B^d (0,r)\backslash B^d(\theta^*,\epsilon_0)$, we have the linear convergence to $\theta^*$. Then we use the fact that $ \|\hat{\theta}-\theta^*\|_2\leq \frac{1}{2}\epsilon_0$, to convert the convergence result to $\hat{\theta}$. Then in the region $B^p(\theta^*,\epsilon_0)$, we have the linear convergence to $\hat{\theta}$.

	Using smoothness on $R(\theta)$, we have
	\begin{equation}
	\begin{split}
	R(\theta^{k+1}_{s})\leq &R(\theta^k_{s})+\langle \nabla R(\theta^k_{s}), \theta^{k+1}_{s}-\theta^k_{s} \rangle+\frac{L}{2} \|\theta^{k+1}_{s}-\theta^{k}_s\|_2^2\\
	=&R(\theta^k_{s})+\langle \nabla R(\theta_s^k), -\eta v_s^k \rangle+\frac{L\eta^2}{2}\|v_s^{k}\|_2^2.\\
	\end{split}
	\end{equation}
	Thus 
	\begin{equation}\label{equ.smooth_svrg}
	\begin{split}
	\mathbb{E}(R(\theta_s)^{k+1})\leq &\mathbb{E} R(\theta_s^k)-\eta \mathbb{E}\langle \nabla R(\theta_s^k),\nabla R_n(\theta_s^k) \rangle+\frac{L\eta^2}{2}\mathbb{E}\|v_s^{k}\|_2^2\\
	=& \mathbb{E} R(\theta_s^k)-\eta \mathbb{E} \|\nabla R(\theta_s^k)\|_2^2+ \eta \mathbb{E}\|\nabla R(\theta_s^k)-\nabla R_n(\theta_s^k)\|_2\|\nabla R(\theta_s^k) \|_2+\frac{L\eta^2}{2}\mathbb{E}\|v_s^k\|_2^2\\
	\leq & \mathbb{E} R(\theta_s^k)-\eta \mathbb{E} \|\nabla R(\theta_s^k)\|_2^2+\frac{L\eta^2}{2}\mathbb{E}\|v_s^k\|_2^2+  \eta \mathbb{E}\epsilon\|\nabla R(\theta_s^k) \|_2.\\
	\end{split}
	\end{equation}
	
	Now we define $\Psi_s^k=\mathbb{E} \left(R(\theta_s^k)+c_k\|\theta_s^k-\tilde{\theta}_{s-1}\|_2^2\right)$, $c_m=0$ and specify the relation of $c_k$ and $c_{k+1}$ later.
	
	Then we bound $\Psi_{s}^{k+1}$ as follows
	
	\begin{equation}\label{equ.potential}
	\begin{split}
	\Psi_{s}^{k+1}=&\mathbb{E} R(\theta_s^{k+1})+c_{k+1} \mathbb{E}\|\theta_s^{k+1}-\tilde{\theta}_{s-1}\|_2^2\\
	=&\mathbb{E} R(\theta_s^{k+1})+c_{k+1} \mathbb{E}\|\theta_s^{k+1}-\theta_s^{k}+\theta_s^{k}-\tilde{\theta}_{s-1}\|_2
	^2 \\
	\leq&\mathbb{E} R(\theta_s^{k+1})+ c_{k+1}\mathbb{E} \|\theta_s^{k+1}-\theta_s^k\|_2^2+2c_{k+1} \mathbb{E}\langle -\eta v_s^k, \theta_s^k-\tilde{\theta}_s \rangle+c_{k+1}\mathbb{E} \|\theta_s^k-\tilde{\theta}_{s-1}\|_2^2\\
	=&\mathbb{E} R(\theta_s^{k+1})+ c_{k+1}\eta^2\mathbb{E} \|v_s^k\|_2^2-2c_{k+1}\eta\mathbb{E}\langle \nabla R_n(\theta_s^k), \theta_s^k-\tilde{\theta}_s \rangle+ c_{k+1}\mathbb{E} \|\theta_s^k-\tilde{\theta}_{s-1}\|_2^2\\
	\leq &\mathbb{E} R(\theta_s^{k+1})+ c_{k+1}\eta^2\mathbb{E}\|v_s^k\|_2^2+2c_{k+1}\eta\left( \frac{1}{2\alpha_k} \mathbb{E}\|\nabla R_n (\theta_s^k)\|_2^2+\frac{1}{2}\alpha_k \mathbb{E}\|\theta_s^k-\tilde{\theta}_{s-1}\|_2^2 \right)+c_{k+1}\mathbb{E} \|\theta_s^k-\tilde{\theta}_{s-1}\|_2^2.
	\end{split}
	\end{equation}
	
	Next we provide a bound on $\mathbb{E} \|v_s^k\|_2^2$ as follows. It is the Lemma 3 in \cite{reddi2016stochastic} and we preset here for completeness.
	
	\begin{equation}\label{equ.variance}
	\begin{split}
	\mathbb{E}\|v_s^k\|_2^2 =& \mathbb{E} \|\nabla \ell_{i_k}(\theta_s^{k})- \ell_{i_k} (\tilde{\theta}_{s-1})+\nabla R_n(\tilde{\theta}_{s-1}) \|_2^2\\
	=& \mathbb{E} \|\nabla \ell_{i_k}(\theta_s^{k})- \ell_{i_k} (\tilde{\theta}_{s-1})+\nabla R_n(\tilde{\theta}_{s-1})-\nabla R_n(\theta_s^k) +\nabla R_n(\theta_s^k)\|_2^2\\
	\leq & 2\mathbb{E}\|\nabla R_n(\theta_s^k)\|_2^2+2 \mathbb{E}\| \nabla \ell_{i_k}(\theta_s^{k})- \nabla\ell_{i_k} (\tilde{\theta}_{s-1})+\nabla R_n(\tilde{\theta}_{s-1})-\nabla R_n(\theta_s^k)\|_2^2\\
	\leq & 2\mathbb{E} \|\nabla R_n(\theta_s^k)\|_2^2+2\mathbb{E}\|\nabla \ell_{i_k}(\theta_s^{k})- \nabla\ell_{i_k} (\tilde{\theta}_{s-1}) \|_2^2\\
	\leq & 2\mathbb{E} \|\nabla R_n(\theta_s^k)\|_2^2+2L^2\mathbb{E}\|\theta_s^k-\tilde{\theta}_{s-1}\|_2^2.
	\end{split}
	\end{equation}
	
	Now we plug the result \eqref{equ.smooth_svrg}	and \eqref{equ.variance} in \eqref{equ.potential} and have
	
	\begin{equation}
	\begin{split}
	\mathbb{E} \Psi_s^{k+1} \leq &\mathbb{E} R(\theta_s^{k})- \eta\mathbb{E} \|\nabla R(\theta_s^k)\|_2^2+ \eta \mathbb{E} \epsilon \|\nabla R(\theta_s^k)\|_2+ \frac{c_{k+1}}{\alpha_k}\eta\mathbb{E} \|\nabla R_n(\theta_s^k)\|_2^2\\
	&(\frac{L\eta^2}{2}+c_{k+1}\eta^2) \mathbb{E} \|v_s^k\|_2^2+c_{k+1} (1+\eta\alpha_k) \mathbb{E} \|\theta_k-\tilde{\theta}_{s-1}\|_2^2\\
	\leq &  \mathbb{E} R(\theta_s^{k})- (\frac{3}{4}\eta-3\frac{c_{k+1}}{\alpha_k}\eta)\mathbb{E} \|\nabla R(\theta_s^k)\|_2^2+(\frac{L\eta^2}{2}+c_{k+1}\eta^2) \mathbb{E} \|v_s^k\|_2^2+c_{k+1} (1+\alpha_k\eta) \mathbb{E} \|\theta_k-\tilde{\theta}_{s-1}\|_2^2\\
	\leq & \mathbb{E} R(\theta_s^{k})+ \left( c_{k+1} (1+\alpha_k\eta+2L^2\eta^2) +L^3\eta^2\right) \mathbb{E} \|\theta_s^k-\tilde{\theta}_{s-1}\|_2^2- (\frac{3}{4}\eta-3\frac{c_{k+1}}{\alpha_k}\eta)\mathbb{E} \|\nabla R(\theta_s^k)\|_2^2\\
	&+(L\eta^2+2c_{k+1}\eta^2) \mathbb{E} \|\nabla R_n(\theta_s^k)\|_2^2\\
	\leq & \mathbb{E} R(\theta_s^{k})+ \left( c_{k+1} (1+\alpha_k\eta+2L^2\eta^2) +L^3\eta^2\right) \mathbb{E} \|\theta_s^k-\tilde{\theta}_{s-1}\|_2^2- (\frac{3}{4}\eta-3\frac{c_{k+1}}{\alpha_k}\eta)\mathbb{E} \|\nabla R(\theta_s^k)\|_2^2\\
	+&2 \left( L\eta^2+2c_{k+1} \eta^2\right) (\mathbb{E} ( \|R(\theta_s^k)\|_2^2+\epsilon^2))\\
	\leq &\mathbb{E} R(\theta_s^{k})+ \left( c_{k+1} (1+\alpha_k\eta+2L^2\eta^2) +L^3\eta^2\right) \mathbb{E} \|\theta_s^k-\tilde{\theta}_{s-1}\|_2^2\\
	& - (\frac{3}{4} \eta-3\frac{c_{k+1}}{\alpha_k}\eta-3L\eta^2-6c_{k+1}\eta^2) \mathbb{E} \|\nabla R(\theta_s^k)\|_2^2, 
	\end{split}
	\end{equation}
	where we use $\|\nabla R_n(\theta)\|_2^2\leq 2\| \nabla R(\theta)\|_2^2+2\epsilon^2$ and choose  $\epsilon\leq \frac{1}{4} \underline{L}_0=\frac{1}{4} \epsilon_0 \mu_0$ in the second  and last inequality , and \eqref{equ.variance} in the third inequality. Notice such requirement of $\epsilon$ can be satisfied by setting $\frac{n}{\log n} \geq  \frac{Cp}{\underline{L}_0^2}$ and the fact that in the region $\theta \in B^p (0,r)\backslash B^p(\theta^*,\epsilon_0)$ $
	\|\nabla R(\theta)\|_2\geq \underline{L}_0$ by Lemma \ref{Lemma.bound_gradient_binary}. The fourth inequality use $(a+b)^2\leq 2a^2+2b^2$.

	We assume that  $(\frac{3}{4}\eta-3\eta\frac{c_{k+1}}{\alpha_k}-3L\eta^2-6c_{k+1}\eta^2):=\beta_k>0 $ (we will verify it later by choosing $\eta$ and $\alpha_k$). 
	
	Recall our setting of $c_k=c_{k+1} (1+\alpha_k\eta+2L^2\eta^2) +L^3\eta^2 $, thus we have

	\begin{equation}
	\begin{split}
	\Psi_s^{k+1}\leq &\mathbb{E} R(\theta_s^k)+c_k \mathbb{E}\|\theta_s^k-\tilde{\theta}_{s-1}\|_2^2- (\frac{3}{4}\eta-3\eta\frac{c_{k+1}}{\alpha_k}-3L\eta^2-6c_{k+1}\eta^2)\\
	=& \Psi_s^{k}- \beta_k \mathbb{E} \|\nabla R(\theta_s^k)\|_2^2 .
	\end{split}
	\end{equation}

	We set $\beta:=\min_k{\beta_k}$, then we have 
	
	$$\mathbb{E} \|\nabla R(\theta_s^k)\|^2\leq \frac{\Psi_s^{k}-\Psi_s^{k+1}}{\beta}.  $$
	
	Then we sum over both side and have
	
	$$\sum_{k=0}^{m}\mathbb{E}\|\nabla R(\theta_s^k)\|^2 \leq \frac{\Psi_s^0-\Psi_s^m}{\beta}. $$

	Define $\Psi_s^0:=\mathbb{E} \left( R(\theta_s^0)+c_0 \|\theta_s^0-\tilde{\theta}_{s-1}\|_2^2 \right) $ and $\theta_s^0=\tilde{\theta}_{s-1} $  thus $\Psi_s^0= \mathbb{E} R(\theta_s^0)$.
	
	Remind that $c_{m}=0$ , $\Psi_s^m=\mathbb{E} R(\theta_s^m)=\mathbb{E} R(\theta_{s+1}^0)$, thus we have
	\begin{equation}\label{equ.svrg_tele}
	\frac{1}{T} \sum_{s=0}^{S-1}\sum_{k=0}^{m-1}\|\nabla R(\theta_s^k)\|_2^2 \leq \frac{R(\theta_0^0)-R(\theta_S^0)}{T\beta}\leq \frac{R(\theta_0^0)-R(\theta^*)}{T\beta}
	\end{equation}

	Next we bound $c_0$ and $ \beta$, recall that
	
	$$ c_k=c_{k+1} (1+\alpha_k\eta+2L^2\eta^2)+L^3\eta^2. $$
	
	We set $\alpha_k$ as a constant $\alpha$ and $\chi=\alpha\eta+2L^2\eta^2$, thus $ c_k=c_{k+1}(1+\chi)+L^3\eta^2.$
	We set $\eta=\kappa/(Ln^{2/3})$ for some positive constant $\kappa$, $\alpha=L/(n^{1/3})$ and $m=\lfloor n/2\kappa \rfloor $, thus $\chi=\frac{\kappa}{n}+\frac{2\kappa^2}{n^{4/3}}\leq 2\kappa/n.$
	
	$$c_0=\frac{\kappa L \left((1+\chi)^m-1\right
		)}{2\kappa+n^{1/3}}\leq\frac{\kappa L (1+\frac{2\kappa}{n})^{\frac{n}{2\kappa}}-1 }{2\kappa+n^{1/3}}\leq \kappa L (e-1) n^{-\frac{1}{3}}.$$
	
	Now we can lower bound $\beta$
	
	$$ \beta\geq \frac{3}{4}\eta-3\eta\frac{c_{0}}{\alpha}-3L\eta^2-6c_{0}\eta^2\geq\eta (\frac{3}{4}-3\kappa (e-1)-3\kappa n^{-\frac{1}{3}}-6\kappa^2 (e-1)/n).$$ Suppose $n$ is large enough and then we can choose $\kappa=0.4$ such that $\beta\geq\eta/2$. 
	
	Now we choose $T=\lceil 20L^2n^{2/3}/\mu_0^2 \rceil$, use the definition of $\theta^j$ and \eqref{equ.svrg_tele} then have
	
	$$ \mathbb{E} \|\nabla R(\theta^j)\|_2^2\leq \big( R(\theta^{j-1})-R(\theta^*)\big)/(4L/\mu_0^2) \leq \frac{1}{2} \mathbb{E}  \|\nabla R(\theta^{j-1})\|_2^2 .$$
	
	where the second equality holds from equation \eqref{equ.pl}.
	
	Thus
	\begin{equation}\label{equ.half_decrease}
	\mathbb{E} \|\nabla R(\theta^j)\|_2^2\leq  (\frac{1}{2})^j \mathbb{E}\|\nabla R(\theta^0)\|_2^2.
	\end{equation}

	Then use equation \eqref{equ.EB}, we obtain
	
	$$ \mathbb{E}\|\theta^j-\theta^*\|_2^2\leq \frac{1}{\mu_0^2} (\frac{1}{2})^j \mathbb{E}\|\nabla R(\theta^0)\|_2^2 \leq \frac{L^2}{\mu_0^2} (\frac{1}{2})^j \|\theta^0-\theta^*\|_2^2.$$
	
	Then we convert the convergence result to $\hat{\theta}$. Particularly,  we use the third result in Lemma \ref{Lemma.bound_gradient_binary}.
	
	\begin{equation}
	\begin{split}
	\|\theta^j-\hat{\theta}\|^2_2 &\leq 2\|\theta^j-\theta^*\|^2_2+2\|\theta^*-\hat{\theta}\|^2_2\leq 2\|\theta^j-\theta^*\|^2_2+2 (\frac{\epsilon_0}{2})^2\\
	&\leq  3 \|\theta^j-\theta^*\|^2_2\leq 3\frac{L^2}{\mu_0^2}(\frac{1}{2})^j \|\theta^0-\theta^*\|_2^2. 
	\end{split}
	\end{equation}
	
	Now we calculate the gradient complexity. In each iteration, we need $n+T=n+n^{2/3} \frac{L^2}{\mu_0^2} $ computations of gradient (we omit some contant here). Thus the result is $ \mathcal{O} ( \big(n+n^{2/3} \frac{L^2}{\mu_0^2} \big) \log(1/\varepsilon)) $.

	Then we prove the convergence in $B^p(\theta^*,\epsilon_0)$. Again we apply Theorem \ref{Theorem:population} on the Hessian of the empirical loss. We denote $\epsilon_h=\|\nabla^2 R(\theta)-\nabla^2 R_n(\theta)\|_{op}$.
	
	\begin{equation}
	\begin{split}
	\lambda_{\min} (\nabla^2 R_n(\theta))&\geq \lambda_{\min} (\nabla^2 R(\theta))-\|\nabla^2 R(\theta)-\nabla^2 R_n(\theta)\|_{op}\\ 
	& \geq \kappa_0-\epsilon_h\geq \frac{1}{2}\kappa_0,
	\end{split}
	\end{equation}
	
	where the last inequality holds by choosing $\frac{n}{\log n}\approx\frac{4C_0p}{\kappa_0^2}.$ Thus $R_n(\theta)$ is $\frac{1}{2}\kappa_0$ strongly convex. Notice this fact , the third result in Lemma  \ref{Lemma.bound_gradient_binary} and other mild condition implies there exist unique minimizer $\hat{\theta}$ by Theorem 4 in \cite{mei2016landscape}.
	
	Now we have
	
	$$ R_n(\theta')\geq R_n(\theta)+\langle \nabla R_n(\theta),\theta'-\theta \rangle+\frac{\kappa_0}{4}\|\theta'-\theta\|_2^2.$$
	Now minimize $\theta'$ at both side, we obtain
	$$ R_n(\theta)-R_n(\hat{\theta})\leq \frac{\|\nabla R_n(\theta)\|_2^2}{\kappa_0}.$$
	The following step is to similar to the case in $B^p(0,r)\backslash B^p(\theta^*,\epsilon_0)$ .
	Notice In the proof, the objective function is $R_n(\theta)$ rather than $R(\theta)$.
	We also need to replace $\frac{\mu_0^2}{L}$ by $\kappa_0$.

	Here we list different steps.
	
	Using the smoothness of $R_n(\theta)$, we have
	
	\begin{equation}
	R_n(\theta^{k+1}_{s})\leq R_n(\theta^k_{s})+\langle \nabla R_n(\theta_s^k), -\eta v_s^k \rangle+\frac{L\eta^2}{2}\|v_s^{k}\|_2^2.\\
	\end{equation}
	Thus 
	\begin{equation}\label{equ.empirical_smooth}
	\begin{split}
	\mathbb{E}(R_n(\theta_s)^{k+1})
	\leq  \mathbb{E} R_n(\theta_s^k)-\eta \mathbb{E} \|\nabla R_n(\theta_s^k)\|_2^2+\frac{L\eta^2}{2}\mathbb{E}\|v_s^k\|_2^2
	\end{split}
	\end{equation}
	Now we define $\Psi_s^k=\mathbb{E} \left(R_n(\theta_s^k)+c_k\|\theta_s^k-\tilde{\theta}_{s-1}\|_2^2\right)$. 
	Notice in the following derivation  we do not have the error term $\|\nabla R_n(\theta)-\nabla R(\theta)\|_2$, since we directly analyze $R_n(\theta)$. Now  we provide counterpart of   \eqref{equ.potential}, and notice the only difference is $R_n(\theta)$ rather than $R(\theta)$.

	\begin{equation}\label{equ.empirical_potential}
	\begin{split}
	\Psi_{s}^{k+1}=&\mathbb{E} R_n(\theta_s^{k+1})+c_{k+1} \mathbb{E}\|\theta_s^{k+1}-\tilde{\theta}_{s-1}\|_2^2\\
	\leq&\mathbb{E} R_n(\theta_s^{k+1})+ c_{k+1}\mathbb{E} \|\theta_s^{k+1}-\theta_s^k\|_2^2+2c_{k+1} \mathbb{E}\langle -\eta v_s^k, \theta_s^k-\tilde{\theta}_s \rangle+c_{k+1}\mathbb{E} \|\theta_s^k-\tilde{\theta}_{s-1}\|_2^2\\
	=&\mathbb{E} R_n(\theta_s^{k+1})+ c_{k+1}\eta^2\mathbb{E} \|v_s^k\|_2^2-2c_{k+1}\eta\mathbb{E}\langle \nabla R_n(\theta_s^k), \theta_s^k-\tilde{\theta}_s \rangle+ c_{k+1}\mathbb{E} \|\theta_s^k-\tilde{\theta}_{s-1}\|_2^2\\
	\leq &\mathbb{E} R_n(\theta_s^{k+1})+ c_{k+1}\eta^2\mathbb{E}\|v_s^k\|_2^2+2c_{k+1}\eta\left( \frac{1}{2\alpha_k} \mathbb{E}\|\nabla R_n (\theta_s^k)\|_2^2+\frac{1}{2}\alpha_k \mathbb{E}\|\theta_s^k-\tilde{\theta}_{s-1}\|_2^2 \right)+c_{k+1}\mathbb{E} \|\theta_s^k-\tilde{\theta}_{s-1}\|_2^2.
	\end{split}
	\end{equation}
	
	Now combine \eqref{equ.empirical_smooth} and \eqref{equ.empirical_potential} and \eqref{equ.variance} together, we have
	
	\begin{equation}
	\begin{split}
	\mathbb{E} \Psi_s^{k+1} 
	\leq \mathbb{E} R_n(\theta_s^{k})+ \left( c_{k+1} (1+\alpha_k\eta+2L^2\eta^2) +L^3\eta^2\right) \mathbb{E} \|\theta_s^k-\tilde{\theta}_{s-1}\|_2^2- ( \eta-\frac{c_{k+1}}{\alpha_k}\eta-3L\eta^2-6c_{k+1}\eta^2) \mathbb{E} \|\nabla R_n(\theta_s^k)\|_2^2. 
	\end{split}
	\end{equation} 
	
	Following step are same with that in the region $B^p(0,r)\backslash B^p(\theta^*,\epsilon_0)$ modulus some changes of constants. Thus the gradient complexity is 
	$$ \mathcal{O} \big( (n+n^{2/3} \frac{L}{\kappa_0}) \log(1/\varepsilon)\big).$$
	
	Notice $\kappa_0=\frac{1}{2}L^2(C\gamma \tau) \tau\underline{\gamma}$ and $\mu_0=L^2(C\gamma \tau) \tau\underline{\gamma} 
	$, thus $\kappa_0\geq \mu_0^2/L$ (suppose $L/\mu_0>2$, it is common in machine learning problem, especially when the problem is ill-conditioned). It means in $B^p(\theta^*,\epsilon_0)$ the algorithm converge faster than the region outside the neighborhood of $\theta^*$. 
	
	Thus the final gradient complexity is 
	
	$$ \mathcal{O} ( \big(n+n^{2/3} \frac{L^2}{\mu_0^2} \big) \log(1/\varepsilon)) .$$
	
\end{proof}

We start to proof the convergence of SAGA.
Before we start the main proof, we present the following lemma which is the Lemma 4 in \cite{reddi2016proximal}.
\begin{lemma}\label{Lemma:variance_saga}
	For the iterates $\theta^k$, $v^k$ and $\alpha_i^t$ where $t\in \{0,...,T-1  \}$ in SAGA, the following inequality holds:
	$\mathbb{E}\|R_n(\theta^k)-v^k\|_2^2\leq \frac{L^2}{nb} \sum_{i=1}^{n} \mathbb{E}\|\theta^k-\alpha_i^k\|_2^2.   $
	
\end{lemma}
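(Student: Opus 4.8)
The plan is to follow the classical variance-decomposition argument for SAGA-type gradient estimators. The first step is to record that $v^k$ is conditionally unbiased for $\nabla R_n(\theta^k)$. Since $g^k=\frac1n\sum_{i=1}^n\nabla\ell_i(\alpha_i^k)$ and $I_k$ is a uniformly random size-$b$ subset of $\{1,\dots,n\}$, taking the conditional expectation over $I_k$ gives $\mathbb{E}[v^k]=\frac1n\sum_{i=1}^n\bigl(\nabla\ell_i(\theta^k)-\nabla\ell_i(\alpha_i^k)\bigr)+g^k=\nabla R_n(\theta^k)$. Writing $Z_i:=\nabla\ell_i(\theta^k)-\nabla\ell_i(\alpha_i^k)$, this means $\nabla R_n(\theta^k)-v^k=\mathbb{E}_i[Z_i]-\frac1b\sum_{i\in I_k}Z_i$, i.e.\ the estimation error is a centered average of the $Z_i$'s.

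Second, I would bound the second moment of this centered average. Expanding $\bigl\|\mathbb{E}_i[Z_i]-\frac1b\sum_{i\in I_k}Z_i\bigr\|_2^2$ and taking expectation over $I_k$, the diagonal terms contribute $\frac1b\,\mathbb{E}_i\|Z_i-\mathbb{E}_iZ_i\|_2^2$ and the off-diagonal (cross) terms either vanish, when $I_k$ is sampled with replacement (independence), or are nonpositive, when $I_k$ is a subset sampled without replacement (negative correlation). Either way we obtain $\mathbb{E}\|\nabla R_n(\theta^k)-v^k\|_2^2\le \frac1b\,\mathbb{E}_i\|Z_i-\mathbb{E}_iZ_i\|_2^2\le \frac1b\,\mathbb{E}_i\|Z_i\|_2^2=\frac1{nb}\sum_{i=1}^n\|\nabla\ell_i(\theta^k)-\nabla\ell_i(\alpha_i^k)\|_2^2$, where the middle step uses the elementary identity $\mathbb{E}\|W-\mathbb{E}W\|^2\le\mathbb{E}\|W\|^2$.

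Third, I invoke the $L$-smoothness of each individual loss $\ell_i$, which holds under Assumption \ref{Assumption:binary} for binary classification and Assumption \ref{Assumption:robust_regression} for robust regression, yielding $\|\nabla\ell_i(\theta^k)-\nabla\ell_i(\alpha_i^k)\|_2\le L\|\theta^k-\alpha_i^k\|_2$. Substituting this into the previous display and taking total expectation over the algorithm's history gives $\mathbb{E}\|\nabla R_n(\theta^k)-v^k\|_2^2\le \frac{L^2}{nb}\sum_{i=1}^n\mathbb{E}\|\theta^k-\alpha_i^k\|_2^2$, which is the claimed bound.

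The only mild subtlety — and the step I would handle most carefully — is the minibatch sampling: one must justify that summing over a size-$b$ index set produces the $1/b$ variance-reduction factor, i.e.\ treat the sum as $b$ (possibly dependent) draws of a mean-zero vector and check the covariance structure. With i.i.d.\ with-replacement sampling this is immediate from independence; with without-replacement sampling the negative correlation among the summands only improves the constant, so the stated inequality holds under either convention for $I_k$ in Algorithm \ref{alg:SAGA}. Everything else is a routine application of the bias--variance identity and per-sample smoothness, so no further case analysis is needed.
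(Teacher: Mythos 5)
Your proof is correct and follows essentially the same argument the paper relies on: the paper does not prove this lemma itself but imports it verbatim as Lemma~4 of \cite{reddi2016proximal}, whose proof is exactly your unbiasedness-plus-variance decomposition of the centered minibatch average followed by per-sample $L$-smoothness. (You also implicitly fix the typo in the statement, which should read $\mathbb{E}\|\nabla R_n(\theta^k)-v^k\|_2^2$, and your care about with- versus without-replacement sampling of $I_k$ is the right level of attention for the $1/b$ factor.)
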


\begin{proof}[Convergence of SAGA]
	We define $\epsilon=\sup_{\theta\in B^p (0,r)} \|\nabla R(\theta)-\nabla R_n(\theta)\|_2 $. Use the smoothness of $R(\theta)$, we obtain
	
	\begin{equation}
	\begin{split}
	\mathbb{E}R(\theta^{k+1})\leq & \mathbb{E}[R(\theta^k)+\langle \nabla R(\theta^k), \theta^{k+1}-\theta^k \rangle+\frac{L}{2} \|\theta^{k+1}-\theta^{k}\|_2^2]\\
	=&\mathbb{E} R(\theta^k)+\mathbb{E}\langle \nabla R(\theta^k), -\eta v^k \rangle+\frac{L\eta^2}{2}\mathbb{E}\|v^{k}\|_2^2.\\
	\leq & \mathbb{E} R(\theta^k)-\eta\mathbb{E} \|\nabla R(\theta^k)\|_2^2+(\frac{L\eta^2}{2}-\frac{\eta}{4})\mathbb{E} \|v^k\|_2^2+\eta \mathbb{E}\epsilon \|\nabla R(\theta^k)\|_2+\frac{\eta}{4}\mathbb{E}\|v^k\|_2^2,\\
	\end{split}
	\end{equation}	
	where we use the fact $\|\nabla R(\theta^k)-\nabla R_n(\theta^k)\|_2\leq \epsilon$.
	
	Notice $v^k$ is an unbiased estimator of $\nabla R_n(\theta_k).$ Thus $\mathbb{E} \|v^k\|^2=\mathbb{E} \|v^k-\nabla R_n(\theta^k)\|_2^2+\mathbb{E}\|\nabla R_n(\theta^k)\|_2^2.$
	
	Thus we have
	
	\begin{equation}\label{equ.saga_smooth}
	\begin{split}
	\mathbb{E} R(\theta^{k+1})\leq &\mathbb{E} R(\theta^k)-(\eta \mathbb{E} \|\nabla R(\theta^k)\|_2^2-\frac{\eta}{4} \mathbb{E} \|\nabla R_n(\theta^k)\|_2^2 )+\frac{\eta}{4}\mathbb{E}\|v^k-\nabla R_n(\theta^k)\|_2^2+\epsilon\eta \mathbb{E}\|\nabla R(\theta^k)\|_2+(\frac{L\eta^2}{2}-\frac{\eta}{4})\mathbb{E} \|v^k\|_2^2 \\
	\leq & \mathbb{E} R(\theta^k)-(\eta \mathbb{E}\|\nabla R(\theta^k)\|_2^2-\frac{\eta}{2} \mathbb{E} \|\nabla R(\theta^k)\|_2^2-\mathbb{E}\frac{\eta}{2} \epsilon^2  )+\epsilon\eta \mathbb{E} \|\nabla R(\theta^k)\|_2+\frac{\eta}{4}\mathbb{E}\|v^k-\nabla R_n(\theta^k)\|_2^2\\
	&+(\frac{L\eta^2}{2}-\frac{\eta}{4})\mathbb{E} \|v^k\|_2^2\\
	\leq & \mathbb{E} R(\theta^k)-(\frac{\eta}{2} \mathbb{E}\|\nabla R(\theta^k)\|_2^2-\mathbb{E}\frac{\eta}{2}\epsilon^2  )+\eta  \mathbb{E} \epsilon\|\nabla R(\theta^k)\|_2
	+\frac{\eta}{4}\frac{L^2}{nb}\sum_{i=1}^{n} \mathbb{E}\|\theta^k-\alpha_i^k\|_2^2\\
	&+(\frac{L\eta^2}{2}-\frac{\eta}{4})\mathbb{E} \|v^k\|_2^2,
	\end{split}
	\end{equation}
	where the second inequality uses $(a+b)^2\leq 2a^2+2b^2$, and the third inequality applies Lemma \ref{Lemma:variance_saga}.

	We define the Lyapunov function $\Psi_k=\mathbb{E} R(\theta^k)+\frac{c_k}{n}\sum_{i=1}^{n} \|\theta^k-\alpha_i^k\|_2^2$ with $c_k= c_{k+1}(1+\beta)(1-p)+\frac{\eta L^2}{4b} $ and $c_K=0$.
	
	\begin{equation}
	\begin{split}
	\Psi_{k+1}=&\mathbb{E} R(\theta^{k+1})+\frac{c_{k+1}}{n}\sum_{i=1}^{n} \mathbb{E}\|\theta^{k+1}-\alpha_i^{k+1}\|_2^2\\
	=&\mathbb{E} \{ R(\theta^{k+1})+\frac{c_{k+1}p}{n} \sum_{i=1}^{n}\|\theta^{k+1}-\theta^k\|_2^2+\frac{c_{k+1}(1-p)}{n} \sum_{i=1}^{n} \|\theta^{k+1}-\alpha_i^k\|_2^2\}\\
	\leq& \mathbb{E} \{ R(\theta^{k+1}) +\frac{c_{k+1}p}{n} \sum_{i=1}^{n}\|\theta^{k+1}-\theta^k\|_2^2+\frac{c_{k+1}(1-p)}{n} \sum_{i=1}^{n} (\|\theta^{k+1}-\theta^k\|_2^2+\|\theta^k-\alpha_i^k\|_2^2+2\langle \theta^{k+1}-\theta^k,\theta^k-\alpha_i^k \rangle ) \}\\
	\leq & \mathbb{E}\{  R(\theta^{k+1})+c_{k+1} (1+\frac{1-p}{\beta})\|\eta v^k \|_2^2+\frac{c_{k+1}(1+\beta)(1-p)}{n}\sum_{i=1}^{n}\|\theta^{k}-\alpha_i^k\|_2^2 \},
	\end{split}
	\end{equation}
	where the second equality uses the definition of $\alpha_k$ in the algorithm, $p=1-(1-\frac{1}{n})^b$. The first inequality uses the fact $2ab\leq \frac{a^2}{\beta}+\beta b^2.$
	
	Now we plug in the equation \eqref{equ.saga_smooth}, and have

	\begin{equation}
	\begin{split}
	\Psi_{k+1}\leq &\mathbb{E}\{ \big( \frac{c_{k+1}(1+\beta)(1-p)}{n}+\frac{\eta L^2}{4nb}\big)\sum_{i=1}^{n}\|\theta^k-\alpha_i^k\|_2^2+  \big(c_{k+1}(1+\frac{1-p}{\beta})\eta^2+\frac{L\eta^2}{2}-\frac{\eta}{4}\big) \|v^k\|_2^2\\
	&+   R(\theta^k)-(\frac{\eta}{2} \|\nabla R(\theta^k)\|_2^2-\frac{\eta}{2}\epsilon^2  )+\epsilon\eta  \|\nabla R(\theta^k)\|_2 \}.
	\end{split}
	\end{equation}
	
	Recall we set $c_k= c_{k+1}(1+\beta)(1-p)+\frac{\eta L^2}{4b} $. We claim $c_{k+1}(1+\frac{1-p}{\beta})\eta^2+\frac{L\eta^2}{2}-\frac{\eta}{4}\leq 0$  by following  the same argument in \cite{reddi2016proximal} (equation (29)) to bound $c_k$.
	
	We set  $b=(n^{2/3})$,$\eta=2\rho/L,\beta=b/4n$ and $\rho\leq\frac{1}{10}$ and $K=\lceil 3L/ \mu_0^2 \eta \rceil.$ Notice $c_K=0$ and $p=1-(1-\frac{1}{n})^b\geq 1-\frac{1}{1+b/n} \geq \frac{b}{2n}$ thus we have $ (1+\beta)(1-p)\leq 1-p+\beta\leq b/4n.$ Then we obtain	
	$$ c_k\leq c_{k+1} (1-\lambda)+\frac{\eta L^2}{4b},$$
	where $\lambda=b/4n$.
	
	Then recurse above equation, we have
	
	$$c_k\leq \frac{\eta L^2}{4b} (\frac{1-(1-\lambda)^{K-k}}{\lambda})\leq \frac{2n\rho L}{b^2}.$$
	
	Combine this upper bound of $c_k$ and the setting of $\eta$, $\beta$, $p$, $b$, we can easily verify
	$c_{k+1}(1+\frac{1-p}{\beta})\eta^2+\frac{L\eta^2}{2}-\frac{\eta}{4}\leq 0.$

	Thus we have 
	\begin{equation}\label{equ.potential_saga}
	\Psi_{k+1}\leq \Psi_k-(\frac{\eta}{2} \mathbb{E}\|\nabla R(\theta^k)\|_2^2-\frac{\eta}{2}\epsilon^2  )+\mathbb{E}\epsilon\eta  \|\nabla R(\theta^k)\|_2.
	\end{equation}

	We choose $\epsilon\leq \frac{1}{4}\underline{L}_0=\frac{1}{4} \epsilon_0 \mu_0$, which can be satisfy by setting $\frac{n}{\log n}\geq  \frac{16p}{\underline{L}_0^2}$ using the Theorem \ref{Theorem:population}. And notice by Lemma \ref{Lemma.bound_gradient_binary} we have for $\theta \in B^p (0,r)\backslash B^p(\theta^*,\epsilon_0)$ $
	\|\nabla R(\theta)\|_2\geq \underline{L}_0$.  Thus we have
	
	\begin{equation}\label{equ.saga_telescope}
	\Psi_{k+1}\leq \Psi_k-\frac{\eta}{3}\mathbb{E}\|\nabla R(\theta^k)\|_2^2.
	\end{equation}
	Summation over both side  we obtain,
	
	$$ \Psi_{K}\leq \Psi_0 -\frac{\eta}{3}\sum_{k=0}^{K-1} \mathbb{E}\|\nabla R(\theta^k)\|_2^2.$$
	Notice $\Psi_K=\mathbb{E}R(\theta^K)$ and $\Psi_0=\mathbb{E} R(\theta^0), $ then we get
	
	$$(\frac{\eta}{3}\sum_{k=0}^{K-1} \mathbb{E}\|\nabla R(\theta^k)\|_2^2)/K\leq \big(\mathbb{E} R(\theta^0)-\mathbb{E} R(\theta^K)\big)/K\leq \big(\mathbb{E} R(\theta^0)-\mathbb{E} R(\theta^*)\big)/K . $$ 
	
	Recall the definition of $\theta_j$,
	and in $j$th epoch notice $\theta^0=\theta_{j-1}$ by the algorithm, we have 
	
	$$ \frac{\eta}{3} \mathbb{E} \|\nabla R(\theta_j)\|_2^2\leq \big(\mathbb{E} R(\theta^0)-\mathbb{E} R(\theta^*)\big)/K\leq \frac{L}{2K\mu_0^2} \mathbb{E} \|\nabla R(\theta_{j-1})\|_2^2.$$
	
	Now we choose $K=\lceil\frac{3L}{\mu_0^2\eta}\rceil,$
	thus $$ \mathbb{E} \|\nabla R(\theta_j)\|_2^2\leq \frac{1}{2} \mathbb{E} \|\nabla R(\theta_{j-1})\|_2^2$$
	
	The following steps are same with that in SVRG (from equation \eqref{equ.half_decrease}). Now we calculate the gradient complexity. Choose $\eta=\frac{1}{5L}$ (recall $\eta=2\rho/L\leq \frac{1}{5L}$), the gradient complexity is $ \mathcal{O}(n+n^{2/3} \frac{L^2}{\mu_0^2}) \log_2(1/\varepsilon)  $ since $b\approx C_1n^{2/3}, K\approx C_2L^2/(\mu_0^2)$ with some absolute positive constant $C_1$ $C_2$.

	Then we prove the convergence in the region $B^p(\theta^*,\epsilon_0)$. Similar to the SVRG, we apply Theorem \ref{Theorem:population} on the Hessian of the empirical loss. We denote $\epsilon_h=\|\nabla^2 R(\theta)-\nabla^2 R_n(\theta)\|_{op}$.
	
	\begin{equation}
	\begin{split}
	\lambda_{\min} (\nabla^2 R_n(\theta))&\geq \lambda_{\min} (\nabla^2 R(\theta))-\|\nabla^2 R(\theta)-\nabla^2 R_n(\theta)\|_{op}\\ 
	& \geq \kappa_0-\epsilon_h\geq \frac{1}{2}\kappa_0,
	\end{split}
	\end{equation}
	
	where the last inequality holds by choosing $\frac{n}{\log n}\geq \frac{4C_0p}{\kappa_0^2}.$ Thus $R_n(\theta)$ is $\frac{1}{2}\kappa_0$ strongly convex and we have
	
	$$ R_n(\theta')\geq R_n(\theta)+\langle \nabla R_n(\theta),\theta'-\theta \rangle+\frac{\kappa_0}{4}\|\theta'-\theta\|_2^2.$$
	Now minimize $\theta'$ at both side, we obtain
	$$ R_n(\theta)-R_n(\hat{\theta})\leq \frac{\|\nabla R_n(\theta)\|_2^2}{\kappa_0}.$$

	The proof is similar to the case of SAGA in $B^p(0,r)\backslash B^p(\theta^*,\epsilon_0)$. Notice now we analyze $R_n(\theta)$ rather than $R(\theta)$ and we do not the error term $\epsilon:=\sup_{\theta\in B^p (0,r)} \|\nabla R(\theta)-\nabla R_n(\theta)\|_2$.
	
	Following similar analysis in equation \eqref{equ.saga_smooth}, we have
	
	\begin{equation}\
	\begin{split}
	\mathbb{E} R_n(\theta^{k+1})
	\leq  \mathbb{E} R_n(\theta^k)-\frac{\eta}{2} \mathbb{E}\|\nabla R(\theta^k)\|_2^2+
	+\frac{\eta}{2}\frac{L^2}{nb}\sum_{i=1}^{n} \mathbb{E}\|\theta^k-\alpha_i^k\|_2^2+(\frac{L\eta^2}{2}-\frac{\eta}{2})\mathbb{E} \|v^k\|_2^2
	\end{split}
	\end{equation}
	
	Then define $\Psi_k=\mathbb{E}R_n(\theta^k) +\frac{c_k}{n}\sum_{i=1}^{n} \|\theta^k-\alpha_i^k\|_2^2$.
	
	Following steps are same except  now we choose $K=\lceil \frac{6L}{\kappa_0 \eta} \rceil$ and some modulus changes constants.
	
	Thus in $B^p(\theta^*,\epsilon_0)$, we have the gradient complexity 
	
	$$ \mathcal{O} \big( (n+n^{2/3} \frac{L}{\kappa_0}) \log(1/\varepsilon)\big).  $$
	
	Again, 	notice $\kappa_0=\frac{1}{2}L^2(C\gamma \tau) \tau\underline{\gamma}$ and $\mu_0=L^2(C\gamma \tau) \tau\underline{\gamma} 
	$, thus $\kappa_0\geq \mu_0^2/L$ (suppose $L/\mu_0>2$, it is common in machine learning problem, especially when the problem is ill-conditioned). It means in $B^p(\theta^*,\epsilon_0)$ the algorithm converge faster than the region outside the neighborhood of $\theta^*$. 
	
	Thus the final gradient complexity is 
	
	$$ \mathcal{O} ( \big(n+n^{2/3} \frac{L^2}{\mu_0^2} \big) \log(1/\varepsilon)) $$.

\end{proof}

\section{Robust regression}
The following Lemma is similar to \cite{mei2016landscape}.
\begin{lemma}
	if Assumption \ref{Assumption:robust_regression} is satisfied and $\|\theta^*\|_2\leq \frac{r}{3}$, then we have 
	\begin{itemize}
		\item There exist $\epsilon_0$ and some positive constant $\kappa_0$ such that $$ \inf_{\theta\in B^p(\theta^*,\epsilon_0)} \lambda_{\min} (\nabla^2 R(\theta))\geq \kappa_0$$
		\item There exist some positive constant $\underline{L}_0$ and $\mu_0$ such that, 
		$$ \inf_{\theta\in B^p(\theta^*,\epsilon_0)} \|\nabla R(\theta)\|_2\geq \underline{L}_0$$,
		$$ \langle \theta-\theta^*,\nabla R(\theta)\rangle\geq  \mu_0\|\theta-\theta^*\|_2^2, \theta \in B^p(0,r) .$$ 
		\item  For all $\theta\in B^p (0,r)\backslash B^p (\theta^*, \epsilon_0/2)$, $\|\nabla R_n(\theta)\|_2>0$.
	\end{itemize}
\end{lemma}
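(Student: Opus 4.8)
The plan is to mirror the proof of Lemma~\ref{Lemma.bound_gradient_binary} almost line for line, with $\zeta'$ replaced throughout by the averaged score function $g$. First I would put the population gradient and Hessian in closed form. Writing $u=\langle\theta-\theta^*,X\rangle$ and using $y=\langle\theta^*,X\rangle+\epsilon$, so that $y-\langle\theta,X\rangle=\epsilon-u$, we have $R(\theta)=\mathbb{E}\,\rho(\epsilon-u)$, $\nabla R(\theta)=-\mathbb{E}[X\psi(\epsilon-u)]$ and $\nabla^2R(\theta)=\mathbb{E}[XX^T\psi'(\epsilon-u)]$. Because $\psi$ is odd and $\epsilon$ has a symmetric distribution, $\mathbb{E}_\epsilon\psi(\epsilon-u)=-\mathbb{E}_\epsilon\psi(u+\epsilon)=-g(u)$; because $\psi'$ is even, $\mathbb{E}_\epsilon\psi'(\epsilon-u)=\mathbb{E}_\epsilon\psi'(u+\epsilon)=g'(u)$. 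Hence
\[
\nabla R(\theta)=\mathbb{E}\big[X\,g(\langle\theta-\theta^*,X\rangle)\big],\qquad
\nabla^2R(\theta)=\mathbb{E}\big[XX^T g'(\langle\theta-\theta^*,X\rangle)\big].
\]
Since $g$ is odd, $g(0)=0$, so $\nabla R(\theta^*)=0$ and $\nabla^2R(\theta^*)=g'(0)\,\mathbb{E}[XX^T]\succeq g'(0)\underline{\gamma}\tau^2 I$, which is positive definite because $g'(0)>0$ by Assumption~\ref{Assumption:robust_regression}. I also note that $|g'(z_1)-g'(z_2)|\le\|\psi''\|_\infty|z_1-z_2|\le L_\psi|z_1-z_2|$, i.e.\ $g'$ is $L_\psi$-Lipschitz; this is the substitute for the Lipschitz bound on $\beta$ used in Lemma~\ref{Lemma.bound_gradient_binary}.

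For the first bullet I would bound $u^{T}(\nabla^2R(\theta)-\nabla^2R(\theta^*))u=\mathbb{E}[(g'(\langle\theta-\theta^*,X\rangle)-g'(0))\langle u,X\rangle^2]$ exactly as in Lemma~\ref{Lemma.bound_gradient_binary}: by the Lipschitz property of $g'$, Cauchy--Schwarz and the sub-Gaussian moment bound \eqref{equ.subgaussian1}, this is at most $L_\psi\sqrt{C_4}\,\|\theta-\theta^*\|_2\tau^3$. Choosing $\epsilon_0$ a small multiple of $g'(0)\underline{\gamma}\tau/(L_\psi\sqrt{C_4})$ then gives $\lambda_{\min}(\nabla^2R(\theta))\ge\kappa_0:=\tfrac12 g'(0)\underline{\gamma}\tau^2$ for all $\theta\in B^p(\theta^*,\epsilon_0)$.

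For the second bullet the one genuinely new step is to produce a quadratic lower bound $z\,g(z)\ge L(s)\,z^{2}$ valid for $|z|\le s$ with some $L(s)>0$; this replaces the bound $\zeta'(t)\ge L(s)$ used in Lemma~\ref{Lemma.bound_gradient_binary}. Since $g$ is odd it suffices to treat $z\in(0,s]$. Near the origin, $g(z)=g'(\xi)z$ for some $\xi\in(0,z)$ by the mean value theorem, and by continuity of $g'$ we may fix $\delta>0$ with $g'(\xi)\ge\tfrac12 g'(0)$ on $(0,\delta)$, so $g(z)/z\ge\tfrac12 g'(0)$ there; on the compact interval $[\delta,s]$, $g$ attains a positive minimum $m$ because $g(z)>0$ for $z>0$, so $g(z)/z\ge m/s$ there. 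Taking $L(s)=\min\{\tfrac12 g'(0),\,m/s\}>0$ gives $z\,g(z)=z^2\cdot(g(z)/z)\ge L(s)z^2$. With this in hand I would repeat the computation of Lemma~\ref{Lemma.bound_gradient_binary} verbatim: starting from $\langle\theta-\theta^*,\nabla R(\theta)\rangle=\mathbb{E}[\langle\theta-\theta^*,X\rangle\,g(\langle\theta-\theta^*,X\rangle)]$, introduce the event $A$ on which $|\langle\theta,X\rangle|$, $|\langle\theta^*,X\rangle|$ and $|\langle\theta-\theta^*,X\rangle|$ are all at most $s$ (via an orthogonal projection onto $\mathrm{span}\{\theta,\theta^*\}$), lower bound $\mathbb{E}[\langle\theta-\theta^*,X\rangle^2\mathbf{1}_A]$ using $\mathbb{E}XX^T\succeq\underline{\gamma}\tau^2I$, Cauchy--Schwarz, \eqref{equ.subgaussian1} and a sub-Gaussian tail bound on $\mathbb{P}(A^c)$, and pick $s$ proportional to $r\tau$ so that the $\sqrt{C_4\,\mathbb{P}(A^c)}$ term is absorbed. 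This yields $\langle\theta-\theta^*,\nabla R(\theta)\rangle\ge\mu_0\|\theta-\theta^*\|_2^2$ with $\mu_0:=\underline{\gamma}L(s)\tau^2$ (up to the absolute constant hidden in $s$), hence by Cauchy--Schwarz $\|\nabla R(\theta)\|_2\ge\mu_0\|\theta-\theta^*\|_2$; since $\|\theta-\theta^*\|_2\ge\epsilon_0$ outside $B^p(\theta^*,\epsilon_0)$, the choice $\underline{L}_0:=\mu_0\epsilon_0$ works.

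The third bullet then follows from Theorem~\ref{Theorem:population}: when $n\ge C_0 p\log n$, with probability at least $1-\exp(-C_1 n)$ we have $\sup_{\theta\in B^p(0,r)}\|\nabla R_n(\theta)-\nabla R(\theta)\|_2\le\tfrac14\mu_0\epsilon_0$, so for any $\theta\in B^p(0,r)\setminus B^p(\theta^*,\epsilon_0/2)$,
\[
\|\theta-\theta^*\|_2\,\|\nabla R_n(\theta)\|_2\ \ge\ \langle\nabla R_n(\theta),\theta-\theta^*\rangle\ \ge\ \mu_0\|\theta-\theta^*\|_2^2-\tfrac14\mu_0\epsilon_0\|\theta-\theta^*\|_2\ \ge\ \tfrac14\mu_0\epsilon_0\|\theta-\theta^*\|_2,
\]
whence $\|\nabla R_n(\theta)\|_2\ge\tfrac14\mu_0\epsilon_0>0$. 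I expect the quadratic lower bound on $z\,g(z)$ to be the only real obstacle: in the binary case $\zeta'>0$ everywhere is assumed, whereas here $g'$ need not remain positive away from $0$, so the mean value theorem cannot be applied uniformly on $[-s,s]$ and one must split the argument into a neighbourhood of the origin, where $g'(0)>0$ and continuity suffice, and a region bounded away from $0$, where compactness together with $g>0$ takes over. Everything else is a relabelling of Lemma~\ref{Lemma.bound_gradient_binary}, replacing $(\zeta',\zeta'',L_\zeta)$ by $(g,\psi'',L_\psi)$ and the Lipschitz constant of $\beta$ by that of $g'$.
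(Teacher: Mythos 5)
Your proposal is correct and follows the same route as the paper's proof: compute $\nabla R$ and $\nabla^2 R$ in terms of $g$ and $g'$ via the symmetry of $\epsilon$ and the oddness of $\psi$, perturb the Hessian away from $\theta^*$ using the Lipschitz bound on $g'$ inherited from $\|\psi''\|_\infty\le L_\psi$, lower-bound the directional gradient on a truncation event $A$ built from an orthogonal projection onto $\mathrm{span}\{\theta,\theta^*\}$, and invoke Theorem \ref{Theorem:population} for the third bullet. The one place you go beyond the paper is the explicit construction of $L(s)$ with $z\,g(z)\ge L(s)z^2$ on $[-s,s]$, splitting into a neighbourhood of the origin (where $g'(0)>0$ and continuity give $g(z)/z\ge\tfrac12 g'(0)$) and the compact complement (where $g>0$ gives a positive minimum); the paper simply reuses the symbol $L(s)$ from the binary-classification lemma without justifying that such a constant exists for $g$, so your argument in fact fills a small gap in the written proof.
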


\begin{proof}
	\begin{equation}
	\begin{split}
	u^T\nabla^2R(\theta^*)u&=\mathbb{E} \psi'(\epsilon) \langle X,u \rangle^2=\mathbb{E} \psi'(\epsilon) \mathbb{E} \langle X,u\rangle^2\\
	&=g'(0) \mathbb{E} \langle X,u \rangle^2\geq g'(0) \mathbb{E} \langle X,u \rangle^2=c_1\underline{\gamma}\tau^2 
	\end{split}
	\end{equation}
	Then we bound the operator norm of $\nabla^2 R(\theta)-\nabla^2 R(\theta^*)$.
	\begin{equation}
	\begin{split}
	|u^T (\nabla^2 R(\theta)-\nabla^2(\theta^*))u|=&| \mathbb{E} \{ (\psi'( \langle X, \theta^*-\theta \rangle+\epsilon )-\psi'(\epsilon))\langle X, u\rangle   \}   |\\
	&\overset{(a)}{=}| \mathbb{E} \{ \psi''(\xi) \langle X,\theta^*-\theta \rangle \langle X, u \rangle^2   \}   |\\
	&\leq L_\psi \mathbb{E} \{ |\langle X,\theta
	^*-\theta \rangle| \langle X,u \rangle^2 \}\\ 
	&\overset{(b)}{\leq} \big(  \mathbb{E} \langle \theta-\theta^*\rangle^2 \mathbb{E} \langle X,u \rangle^4 \big)\\
	&\overset{(c)}{\leq} L_\psi(\|\theta-\theta^*\|_2^2 \tau^6 C_4)^{1/2}\\
	&=L_\psi \sqrt{C_4}\|\theta-\theta^*\|_2 \tau^3	
	\end{split}
	\end{equation}
	where (a) uses the intermediate value theorem, the (b) uses Cauchy-Schwarz inequality, (c) uses equation \eqref{equ.subgaussian1} again.
	
	Now we choose $\epsilon_0=\frac{c_1\underline{\gamma}}{2L_\psi \sqrt{C_4}\tau}$ then in the region $ B^p(\theta^*,\epsilon_0)$ we have $\lambda_{\min}(\nabla^2 R(\theta))\geq \kappa_0:= \frac{c_1}{2}\underline{\gamma}\tau^2$.
	
	Then we bound the directional gradient. Again we define let $U\in \mathbb{R}^{2\times d}$ be an orthogonal transform from $\mathbb{R}^p$ to $\mathbb{R}^2$, whose row space contain $\theta$ and $\theta^*$ and $A=\{ \|UX\|_2\leq \frac{2s}{3r}  \}$. 
	\begin{equation}
	\begin{split}
	\langle \theta-\theta^* , \nabla R(\theta)\rangle=&\mathbb{E}\{ \mathbb{E} \psi(\langle \theta^*-\theta,X \rangle+\epsilon)\langle \theta^*-\theta,X \rangle\} \\
	=&\mathbb{E}\{ g(\langle\theta^*-\theta \rangle)\langle \theta^*-\theta,X \rangle  \}\\
	\geq& L(s) \mathbb{E}\{ \langle \theta-\theta^*,X \rangle^2-\langle \theta-\theta^*,X  \rangle^2 \mathbf{1}_{A^C}  \}\\
	\geq &  L(s) \{ \underline{\gamma}\tau^2 \|\theta-\theta^*\|_2^2- \big( \mathbb{E} \langle\theta-\theta^* \rangle^4 \mathbb{P}(A^c) \big)^{1/2}  \}\\
	\geq & L(s) \|\theta-\theta^*\|_2^2 \tau^2 (\underline{\gamma}-\sqrt{C_4\mathbb{P}(A^c)}.	
	\end{split}
	\end{equation}

	Similar to the proof of Lemma \ref{Lemma.bound_gradient_binary}, use the the fact that $X$ is a $\tau^2$ sub-Gaussian, we have
	
	$$\mathbb{P} (A^c)=\mathbb{P} (\|UX\|_2\geq \frac{2s}{3r})\leq P( |U_1,X|\geq \frac{\sqrt{2}s}{3r})+P( |U_2,X|\geq \frac{\sqrt{2}s}{3r})\leq 4\exp (\frac{-s^2}{9r^2\tau^2}).$$
	Thus we have
	$$ \langle \theta-\theta^*, \nabla R(\theta)\rangle \geq 2L^2 (s) \|\theta-\theta^*\|^2_2 \tau^2 \big( \underline{\gamma}-2\sqrt{C_4}\exp(\frac{-s^2}{18r^2\tau^2})  \big) .$$	
	
	We set $s=3\sqrt{2}C r\tau $ and $ 2\sqrt{C_4}\exp (-\frac{-s^2}{18r^2\tau^2})\leq \frac{1}{2} \underline{\gamma}. $ 
	
	We then denote $\mu_0=\underline{\gamma} L^2(s) \tau^2$, then have
	\begin{equation}\label{equ.smallest_direction_robust}
	\langle \theta-\theta^*, \nabla R(\theta) \rangle\geq \mu_0 \|\theta-\theta^*\|_2^2,
	\end{equation}	
	which implies $ \|\nabla R(\theta)\|_2 \geq 
	\mu_0 \|\theta-\theta^*\|$.

	For $\theta\in B^p (0,r)\backslash B^p (\theta^*, \epsilon_0/2),$ we have
	
	\begin{equation}
	\begin{split}
	&\inf_{\theta\in B^p (0,r)\backslash B^p (\theta^*, \epsilon_0/2)} \frac{\langle \nabla R_n(\theta),\theta-\theta^* \rangle}{\|\theta-\theta^*\|}\\
	&\geq \inf_{\inf_{\theta\in B^p (0,r)\backslash B^p (\theta^*, \epsilon_0/2)} } \frac{\langle \nabla R(\theta),\theta-\theta^* \rangle}{\|\theta-\theta^*\|}-\sup_{\theta\in B^p (0,r)\backslash B^p (\theta^*, \epsilon_0/2) }\| \nabla R_n(\theta)-\nabla R(\theta)\|_2\\
	&\geq \inf_{\theta\in B^p (0,r)\backslash B^p (\theta^*, \epsilon_0/2)}  \mu_0 \|\theta-\theta^*\|_2-\frac{1}{4} \mu_0 \epsilon_0\geq \frac{1}{4}\mu_0 \epsilon_0
	\end{split}
	\end{equation}
	Then we have
	$$ \|\theta-\theta^*\|_2\|\nabla R_n(\theta)\|_2\geq \frac{\mu_0}{4} \|\theta-\theta^*\|,$$
	Thus $\|\nabla R(\theta)\|_2>0.$
\end{proof}

\begin{proof}[Proof of Theorem \ref{Theorem:Robust regression}]
	Proof of the convergence of gradient decent method, SVRG and SAGA are basically same with that in binary classification case. What we do is to replace the corresponding $\underline{L_0} $ $\mu_0$, $\kappa_0$, follow same line and finish the proof. 
\end{proof}

\end{document}